\newcommand\fauxsc[1]{\fauxschelper#1 \relax\relax}
\def\fauxschelper#1 #2\relax{%
  \fauxschelphelp#1\relax\relax%
  \if\relax#2\relax\else\ \fauxschelper#2\relax\fi%
}
\def\Hscale{.83}\def\Vscale{.72}\def\Cscale{1.00}
\def\fauxschelphelp#1#2\relax{%
  \ifnum`#1=\lccode`#1\relax\texorpdfstring{\scalebox{\Hscale}[\Vscale]{\char\uccode`#1}}{#1}\else%
    \texorpdfstring{\scalebox{\Cscale}[1]{#1}}{#1}\fi%
  \ifx\relax#2\relax\else\fauxschelphelp#2\relax\fi}
\newtheorem{proposition}{Proposition}
\newcommand{\ours}{\fauxsc{ParetoSelect}}
\DeclareMathOperator{\argmax}{argmax}
\DeclareMathOperator{\condition}{~|~}
\DeclareMathOperator{\crpsfunc}{CRPS}
\DeclareMathOperator{\ncrpsfunc}{nCRPS}
\DeclareMathOperator{\sort}{{\tt sort}}
\newcommand{\R}{\mathbb{R}}
\newcommand{\modelspace}{\mathcal{X}}
\newcommand{\candidatespace}{\mathcal{S}}
\newcommand\paretoset[2]{\mathcal{P}_{#1}\left(#2\right)}
\newcommand{\hyperbandintervalset}{\mathcal{I}}
\newcommand{\timeseriesdataset}{\mathcal{Z}}
\newcommand{\timeseries}{\bm{z}}
\newcommand{\timeseriesa}{z_a}
\newcommand{\timeseriesap}{z_{a+1}}
\newcommand{\timeseriesb}{z_b}
\newcommand{\timeseriest}{z_t}
\newcommand{\timeserieslength}{T}
\newcommand{\objectivefunction}{\bm{f}}
\newcommand\objectivefunctioni[1]{f_{#1}}
\newcommand{\objectivefunctionsome}{f}
\newcommand{\surrogate}{\tilde{\objectivefunction_{\theta}}}
\newcommand{\surrogatesome}{\tilde{\objectivefunctionsome}_{\theta}}
\newcommand{\surrogatesomek}[1]{\tilde{\objectivefunctionsome}_{\theta;#1}}
\newcommand{\surrogateparams}{\theta}
\newcommand{\horizon}{\tau}
\newcommand{\horizonmultiple}{p}
\newcommand{\train}{\text{train}}
\newcommand{\val}{\text{val}}
\newcommand{\test}{\text{test}}
\newcommand{\quantile}{q}
\newcommand{\duration}{d}
\newcommand{\crps}{\mathcal{Q}}
\newcommand{\significance}{\alpha}
\newcommand{\quantilefunc}{F}
\newcommand{\hypervolume}{\mathcal{H}}
\newcommand{\contextlength}{l}
\newcommand{\loss}{\mathcal{L}}
\newcommand{\numobjectives}{m}
\newcommand{\numevaluations}{N}
\newcommand{\numrecommendations}{n}
\newcommand{\numtasks}{T}
\newcommand{\numtimeseries}{K}
\newcommand{\nummodels}{k}
\newcommand{\numdatasets}{44}
\title{Multi-Objective Model Selection for Time Series Forecasting}
\author{%
    \name Oliver Borchert \email borchero@in.tum.de \\
    \addr Technical University of Munich\thanks{work done while interning at AWS AI Labs}
    \AND
    \name David Salinas \email dsalina@amazon.com \\
    \addr AWS AI Labs
    \AND
    \name Valentin Flunkert \email flunkert@amazon.com \\
    \addr Amazon Web Services
    \AND
    \name Tim Januschowski \email tim.januschowski@zalando.de \\
    \addr Zalando Research\thanks{work done while employed at AWS AI Labs}
    \AND
    \name Stephan Günnemann \email guennemann@in.tum.de \\
    \addr Technical University of Munich
}
\begin{document}

\maketitle

\begin{abstract}
    Research on time series forecasting has predominantly focused on developing methods that improve accuracy. However, other criteria such as training time or latency are critical in many real-world applications. We therefore address the question of how to choose an appropriate forecasting model for a given dataset among the plethora of available forecasting methods when accuracy is only one of many criteria.

    For this, our contributions are two-fold. First, we present a comprehensive benchmark, evaluating 7 classical and 6 deep learning forecasting methods on \numdatasets{} heterogeneous, publicly available datasets. The benchmark code is open-sourced along with evaluations and forecasts for all methods. These evaluations enable us to answer open questions such as the amount of data required for deep learning models to outperform classical ones.

    Second, we leverage the benchmark evaluations to learn good defaults that consider multiple objectives such as accuracy and latency. By learning a mapping from forecasting models to performance metrics, we show that our method \ours{} is able to accurately select models from the Pareto front --- alleviating the need to train or evaluate many forecasting models for model selection. To the best of our knowledge, \ours{} constitutes the first method to learn default models in a multi-objective setting.
\end{abstract}

\section{Introduction} \label{sec:introduction}

For decades, businesses have been using time series forecasting to drive strategic decision-making \citep{logic-of-logistics,forecasting-principles,theory-and-practice}. Analysts leverage forecasts to gauge resource requirements, retailers forecast future product demand to optimize their supply chains, cloud providers predict future web traffic to scale server fleets, and in the energy sector, forecasting plays a crucial role e.g. to predict load and energy prices. In domains like these and many other, more precise predictions directly translate into an increase in profit. Thus, it is no surprise that research on forecasting methods has historically focused on improving accuracy.

In addition to more classical \emph{local} forecasting methods which fit a model per time series, \emph{global} forecasting models such as deep learning and tree-based models have demonstrated state-of-the-art forecasting accuracy~\citep{mqrnn,nbeats,deepar,m4-winner} when sufficient training data is available \citep{statistical-vs-ml,method-classification}. This research has led to a large variety of different forecasting models and hyperparameter choices \citep{auto-ml}, where different models exhibit vastly different characteristics, including accuracy, training time, model size, and inference latency.

While this variety of forecasting models is a great resource, it introduces challenging questions.
On the one hand, researchers would like to understand patterns in the performance of different models and to benchmark new models against existing methods. On the other hand, practitioners are interested in understanding which model performs best for a particular dataset or application.

In this work, we address both of these problems: we release one of the most comprehensive publicly available evaluations of forecasting models across \numdatasets{} datasets. Using this benchmark dataset, we develop a novel method for learning \emph{good defaults} for forecasting models on previously unseen datasets. Importantly, we adopt a multi-objective perspective that allows us to select forecasting models that are simultaneously accurate and satisfy constraints such as inference latency, training time, or model size.

The main contributions of this work can be summarized as follows:
\begin{itemize}[leftmargin=5.5mm]
	\item We release the evaluations of 13~forecasting methods on \numdatasets{} public datasets with respect to multiple performance criteria (different forecast accuracy metrics, inference latency, training time, and model size). This constitutes, by far, the most comprehensive publicly available evaluation of forecasting methods. Those evaluations can be leveraged, for example, to assess the relative performance of future forecasting methods with little effort.
	\item As an example application of this benchmark, we use the data to perform a statistical analysis which shows that only a few thousands observations are required for deep learning methods to outperform classical methods. In addition, we investigate the benefit of ensembling forecasting models.
	\item We propose a novel method that can leverage offline evaluations to learn good default models for unseen datasets. Default models are selected to optimize for multiple objectives. 
	\item We introduce a technique for ensembling models in a multi-objective setting where the resulting ensemble is not only highly accurate but also optimized for other objectives, such as a low inference latency.
\end{itemize}

\section{Related Work} \label{sec:related-work}

\paragraph{Time Series Forecasting.}
Time series forecasting has seen a recent surge in attention by the academic community that has started to reflect its relevance in business applications. Traditionally, univariate, so-called \emph{local} models that consider time series individually have dominated \citep{forecasting-principles}. However, in modern applications, methods that learn \emph{globally} across a set of time series can be more accurate \citep{nbeats,deepar,tft,locality-and-globality} --- in particular, methods that rely on deep learning. With a considerable choice of models available, it is unclear which methods should perform best on which forecasting dataset, unlike in other machine learning domains where dominant approaches exist.

In domains such as computer vision or neural architecture search (NAS), offline computations have been harvested and leveraged successfully to perform extensive model comparisons or compare different NAS strategies \citep{ying2019nasbench101,dong2020nasbench201,pfisterer2021learning}. However, to the best of our knowledge, no comprehensive set of model evaluations has been released in the realm of forecasting. Consequently, some questions remain open: how much data is required for global deep learning methods to outperform classical local methods such as ARIMA or ETS \citep{forecasting-principles}? What is the impact on accuracy when ensembling different forecasting models? While some recent methods incorporate ensembling \citep{nbeats,jeon2021robust}, comparisons are often made against individual models which may cloud the benefit of the method proposed versus the sole benefit of ensembling.

\paragraph{Learning Default Models.}
Finding the best model or set of hyperparameters is often performed via Bayesian optimization given its theoretical regret guarantees \citep{Srinivas_2012}. However, even with early-stopping techniques \citep{vizier,hyperband}, practitioners often restrict the search to a single model due to the large cost of training many models. One technique to drastically speed up the model/hyperparameter search is to reuse offline evaluations of related datasets via transfer learning.
For instance, \cite{witsuba,zero-shot-hpo,pfisterer2021learning} leverage offline evaluations to alleviate the need for training many models by learning a small list of $\numrecommendations$ good defaults that provide a small joint error when evaluated on all datasets. This approach bears a lot of similarity with ours. Key differences are that we consider multiple objectives and model ensembles. In the time series domain, \cite{autoai-ts} also considers the task of automatically choosing from a large pool of forecasting models the one performing best on a particular dataset but also only optimizes for a single objective and relies on potentially expensive model training for model selection.

\section{Benchmarking Forecasting Methods} \label{sec:benchmark}

In this section, we formally introduce the problem of time series forecasting and subsequently provide an overview of the benchmark evaluations that we release. By evaluating 13 forecasting methods along with different hyperparameter choices on all \numdatasets{} benchmark datasets and for two random seeds, the benchmark comprises 4,708 training runs and amasses over 1~TiB of forecast data. At the end of this section, we demonstrate how this benchmark data can be used to perform an extensive comparison of contemporary forecasting methods.

\subsection{Time Series Forecasting} \label{sec:forecasting}

The goal of time series forecasting is to predict the future values of one or more time series based on historical observations. Formally, we consider a set $\timeseriesdataset = \{\timeseries^{(i)}_{1:\timeserieslength_i}\}_{i=1}^{\numtimeseries}$ of $\numtimeseries$ univariate, equally-spaced time series of lengths $\timeserieslength_i$, where $\timeseries^{(i)}_{a:b}=(\timeseriesa^{(i)}, \timeseriesap^{(i)}, \dots, \timeseriesb^{(i)})$ denotes the vector of observations for the $i$-th time series in the time interval $a \le t \le b$. In probabilistic time series forecasting, a model then estimates the probability distribution across the forecast horizon $\tau$, i.e. the distribution over the $\horizon$ future values, from the historical observations:
\begin{equation} \label{eq:forecasting-distribution}
    \mathbb{P}\left(\timeseries^{(i)}_{T_{i + 1}:T_{i + \horizon}} ~\middle|~ \timeseries^{(i)}_{1:\timeserieslength_{i}}\right)
\end{equation}
Models typically approximate the joint distribution of Eq.~\eqref{eq:forecasting-distribution} via Monte Carlo sampling \citep{deepar} or learn to directly predict a set of distribution quantiles for multiple time steps using quantile regression \citep{mqrnn,tft}.

\subsection{Methods} \label{sec:benchmark-methods}

We consider 13 models in total that can be distinguished into 7 \emph{local} methods (estimating parameters individually from each time series) and 6 \emph{global} methods (learning from all available time series jointly). Details about models can be found in Appendix~\ref{apx:models}.

\paragraph{Local Methods.}
We use \emph{Seasonal Naïve} \citep{forecasting-principles} and \emph{NPTS}~\citep{npts} as simple, non-parametric baselines. Further, we consider \emph{ARIMA} and \emph{ETS} \citep{forecasting-principles} as well-known statistical methods and \emph{STL-AR} \citep{stlar} and \emph{Theta} \citep{theta} as inexpensive alternatives. Lastly, we include \emph{Prophet} \citep{prophet}, an interpretable model that has received plenty of attention.

\paragraph{Global Methods.}
All global methods that we use are deep learning models, namely: \emph{Simple Feedforward} \citep{gluonts}, \emph{MQ-CNN} and \emph{MQ-RNN} \citep{mqrnn}, \emph{DeepAR} \citep{deepar}, \emph{N-BEATS} \citep{nbeats}, and \emph{TFT} \citep{tft}. For each model (except MQ-RNN), we consider three hyperparameter settings: this includes the default set provided by their implementation as well as hyperparameter sets that roughly halve and double the default model capacity (see Table~\ref{tab:model-overview} in the appendix). Additionally, we consider three different context lengths that govern the length of the time series that predictions are conditioned on.

\paragraph{Model Training.} Model training is only required for deep learning models since parametric local methods estimate parameters at prediction time. Deep learning models are trained for a fixed duration depending on the size of the dataset. 
Further details can be found in Appendix~\ref{apx:training}.

\subsection{Datasets} \label{sec:benchmark-datasets}

Our benchmark provides \numdatasets{} heterogeneous public datasets in total. The datasets greatly differ in the number of time series (from 5 to $\approx$170,000), their mean length (from 19 to $\approx$500,000), their frequency (minutely, hourly, daily, weekly, monthly, quarterly, yearly), and the forecast horizon (from 4 to 60). Thus, we expect them to cover a wide range of datasets encountered in practice.

Datasets are taken from various forecasting competitions, the UCI \citep{uci}, and the Monash time series forecasting repository \citep{monash}. Dataset sources, descriptions, basic statistics, and an explanation of the data preparation procedure can be found in Appendix~\ref{apx:datasets}.

\subsection{Metrics}

To measure the accuracy of forecasting models, we employ the normalized \emph{continuous ranked probability score} (nCRPS) \citep{crps-original,crps} whose definition is given in Appendix~\ref{apx:models}. The benchmark dataset that we release contains four additional forecast accuracy metrics (MAPE, sMAPE, NRMSE, ND). Besides forecasting accuracy metrics, we store inference latency, model size (i.e. number of parameters) and training time for deep learning models.
Latency is measured by dividing the time taken to generate predictions for all test time series in the dataset by their number. 

\subsection{Benchmark Analysis} \label{sec:benchmark_analysis}

Having outlined the benchmark, we now want to demonstrate the usefulness of the evaluations for research. For this, we want to analyze how local (classical) and global (deep learning) forecasting methods compare against each other. In fact, this comparison has been the object of heated discussions in the forecasting community \citep{statistical-vs-ml,method-classification}.

\begin{table}[t]
    \centering
    \footnotesize
    \caption{The average performance of various forecasting methods across all datasets with respect to relative latency (compared to Seasonal Naïve) and nCRPS. The table shows classical methods (top), deep learning methods (middle, left), hyper-ensembles using all hyperparameter configurations of deep learning models (middle, right), and latency-constrained ensembles (bottom). Best values across within each group are displayed in bold, best values across all models are starred.}
    \begin{tabular}{lcc}
    \toprule
    {} &                                                         Rel. Latency &                                                       nCRPS Rank \\
    \midrule
    \textbf{ARIMA                        } &                              31,100.33 {\scriptsize $\pm$ 77,869.47} &                                   \textbf{13.86 {\scriptsize $\pm$ 5.83}} \\
    \textbf{ETS                          } &                                1,183.26 {\scriptsize $\pm$ 2,135.09} &                                   15.45 {\scriptsize $\pm$ 6.87} \\
    \textbf{NPTS                         } &                                    133.64 {\scriptsize $\pm$ 138.10} &                                   17.82 {\scriptsize $\pm$ 7.35} \\
    \textbf{Prophet                      } &                                2,635.38 {\scriptsize $\pm$ 2,610.07} &                                   17.82 {\scriptsize $\pm$ 5.52} \\
    \textbf{Seasonal Naïve               } &                                       \textbf{*1.00 {\scriptsize $\pm$ 0.00}} &                                   19.86 {\scriptsize $\pm$ 4.22} \\
    \textbf{STL-AR                       } &                                      76.12 {\scriptsize $\pm$ 49.33} &                                   15.57 {\scriptsize $\pm$ 6.73} \\
    \textbf{Theta                        } &                                      25.15 {\scriptsize $\pm$ 10.29} &                                   15.59 {\scriptsize $\pm$ 6.19} \\
    \midrule
    \textbf{DeepAR                       } &  21.55 {\scriptsize $\pm$ 15.47} / 310.62 {\scriptsize $\pm$ 227.24} &    9.91 {\scriptsize $\pm$ 5.79} / \textbf{6.73 {\scriptsize $\pm$ 5.24}} \\
    \textbf{MQ-CNN                       } &      1.78 {\scriptsize $\pm$ 0.99} / 17.48 {\scriptsize $\pm$ 10.64} &  15.86 {\scriptsize $\pm$ 5.75} / 13.07 {\scriptsize $\pm$ 5.98} \\
    \textbf{MQ-RNN                       } &        2.01 {\scriptsize $\pm$ 1.18} / \textbf{6.53 {\scriptsize $\pm$ 4.19}} &  22.07 {\scriptsize $\pm$ 4.30} / 21.02 {\scriptsize $\pm$ 4.92} \\
    \textbf{N-BEATS                      } &      3.33 {\scriptsize $\pm$ 1.40} / 34.29 {\scriptsize $\pm$ 14.92} &  18.32 {\scriptsize $\pm$ 3.45} / 16.59 {\scriptsize $\pm$ 3.71} \\
    \textbf{Simple Feedforward           } &       \textbf{1.57 {\scriptsize $\pm$ 0.46}} / 14.12 {\scriptsize $\pm$ 4.21} &   12.32 {\scriptsize $\pm$ 4.24} / 9.45 {\scriptsize $\pm$ 4.16} \\
    \textbf{TFT                          } &      4.66 {\scriptsize $\pm$ 2.03} / 44.08 {\scriptsize $\pm$ 20.33} &    \textbf{8.70 {\scriptsize $\pm$ 4.71}} / 6.75 {\scriptsize $\pm$ 4.61} \\
    \midrule
    \textbf{Constrained Ensemble (1 ms)  } &                                        \textbf{2.29 {\scriptsize $\pm$ 1.09}} &                                   13.77 {\scriptsize $\pm$ 5.01} \\
    \textbf{Constrained Ensemble (5 ms)  } &                                       11.99 {\scriptsize $\pm$ 7.23} &                                    9.45 {\scriptsize $\pm$ 5.20} \\
    \textbf{Constrained Ensemble (10 ms) } &                                      24.20 {\scriptsize $\pm$ 14.48} &                                    8.30 {\scriptsize $\pm$ 4.97} \\
    \textbf{Constrained Ensemble (50 ms) } &                                     107.68 {\scriptsize $\pm$ 62.63} &                                    6.57 {\scriptsize $\pm$ 5.04} \\
    \textbf{Constrained Ensemble (100 ms)} &                                     157.75 {\scriptsize $\pm$ 98.55} &                                    5.55 {\scriptsize $\pm$ 4.28} \\
    \textbf{Unconstrained Ensemble       } &                                    196.41 {\scriptsize $\pm$ 135.61} &                                   \textbf{*4.36 {\scriptsize $\pm$ 3.37}} \\
    \bottomrule
\end{tabular}

    \label{tab:model-comparison}
\end{table}

\paragraph{Method Comparison.}
In order to compare methods, we consider their latency and accuracy (in terms of nCRPS) across all datasets. Table~\ref{tab:model-comparison} shows each method's relative latency compared to Seasonal Naïve as well as the the methods' nCRPS rank\footnote{Ranks are computed over all methods and ensembles, including the constrained ensembles of Section~\ref{sec:multi-objective-ensembling}.}. As far as latency is concerned, global methods, once trained, allow to generate forecasts considerably faster than local methods (except for Seasonal Naïve). The reason is simple: once trained, they simply need to run a forward pass. In contrast, the implementations for the local methods chosen here do not differentiate between training and inference, but rather estimate their parameters at prediction time. Across datasets, the relative latency of all considered deep learning models improves between 15\%~and~95\% upon the latency of the fastest local method (excluding Seasonal Naïve). As far as accuracy is concerned, complex deep learning models --- namely DeepAR and TFT --- generally perform best across the benchmark datasets. Except for MQ-RNN, global methods compare favorably against local methods.

As a measure of \emph{relative} model stability, Table~\ref{tab:model-comparison} additionally shows the standard deviation of the nCRPS rank across all benchmark datasets. Not only is TFT the method which performs best on average, its rank is also comparatively consistent compared to other methods. Especially, it is more stable than DeepAR which seems to generate relatively inaccurate forecasts on some datasets.

Table~\ref{tab:model-comparison} also indicates the performance of ensembles obtained by averaging the predictions of $\numrecommendations = 9$ ($\numrecommendations = 3$ for MQ-RNN) deep learning models trained with different hyperparameters (\emph{hyper-ensembles}) as done for N-BEATS \citep{nbeats} or in M5 competition's third-ranking method \citep{jeon2021robust}. While generally bearing a significant cost in latency, the benefit of ensembling is significant. For instance, the Simple Feedforward hyper-ensemble yields a competitive model that outperforms DeepAR in terms of both accuracy and latency. Lastly, DeepAR and TFT hyper-ensembles result in the most competitive ensembles.

\begin{figure}
    \begin{floatrow}
        \capbfigbox{
            \includegraphics[width=0.455\textwidth]{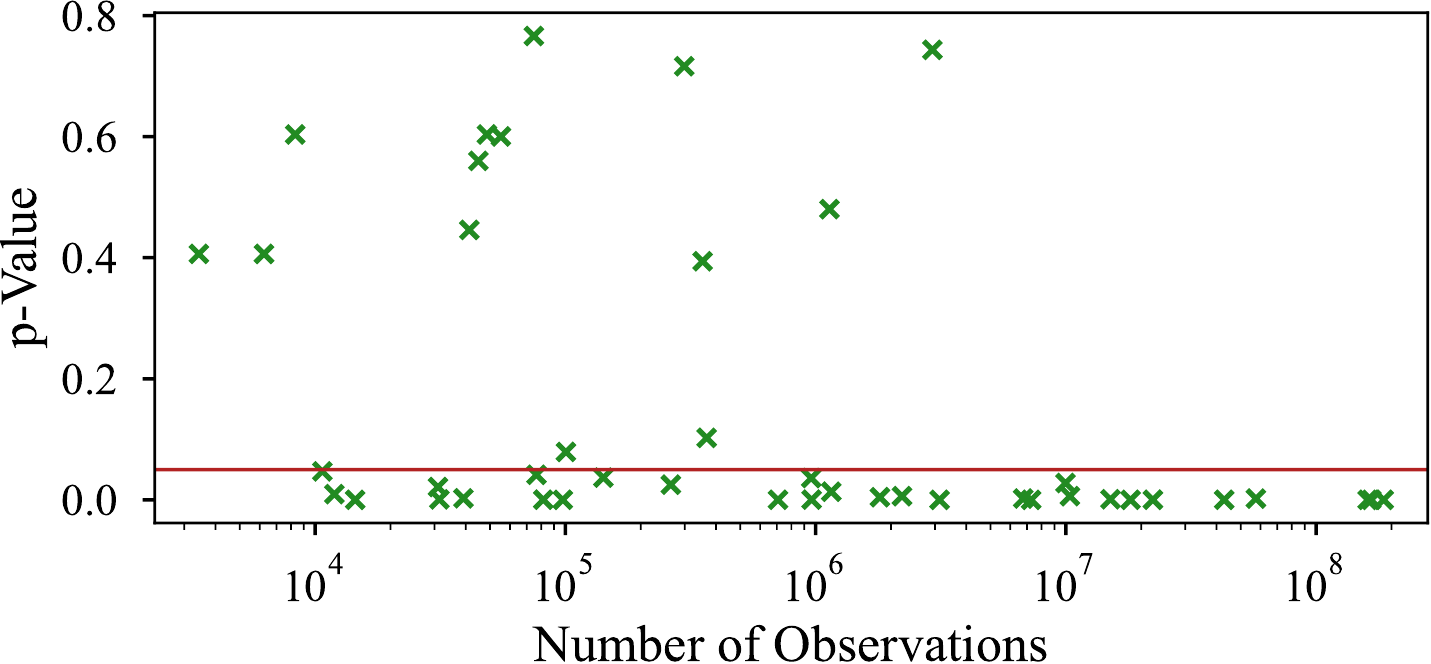}
        }{
            \caption{The $p$-values of the null hypothesis that statistical methods perform equal to or better than deep learning methods, evaluated for datasets of different sizes. The red line displays the significance level ${\significance = 5\%}$.}
            \label{fig:pvalues}
        }
        \capbfigbox{
            \includegraphics[width=0.455\textwidth]{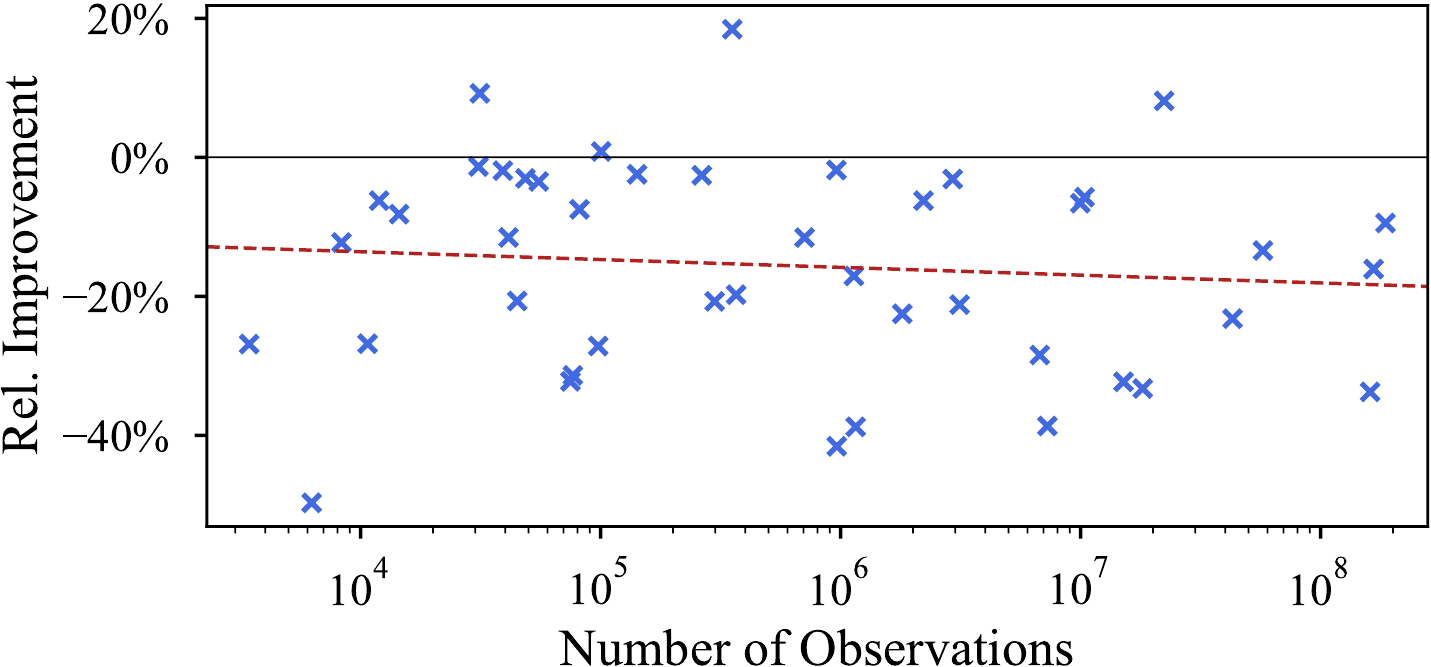}
        }{
            \caption{The relative improvement of the best deep learning method versus the best classical method on all benchmark datasets. The red line is fitted on the visualized datapoints via linear regression in the log-space.}
            \label{fig:relative-improvement}
        }
    \end{floatrow}
\end{figure}

\paragraph{Comparison of Classical and Deep Learning Methods.} We further conduct a statistical analysis to compare the classical time series models with the deep learning models listed in Section~\ref{sec:benchmark-methods}. For this, we construct the null hypothesis that the accuracy of classical models is equal to or better than the accuracy of deep learning methods. This is a claim put forward, for example, by~\cite{statistical-vs-ml}. More formally, we write $H_0: \crps_\text{class} \le \crps_\text{deep}$ where $\crps_\text{class}$ and $\crps_\text{deep}$ describe the distribution of nCRPS values of classical and deep learning models, respectively. Samples of the distributions are derived from the respective single-model evaluations in our benchmark. $\crps_\text{class}$ and $\crps_\text{deep}$ are continuous and belong to unknown families of distributions with unequal variances. Hence, we choose the nonparametric two-sample Kolmogorov-Smirnov test \citep{nonparametric-inference} to compute the $p$-value, i.e. the probability of $H_0$ holding true, on each individual dataset.

We evaluate the null hypothesis $H_0$ on all \numdatasets{} benchmark datasets and plot the $p$-value for the different dataset sizes in Figure~\ref{fig:pvalues}. At a significance level of $\significance = 5\%$, $H_0$ can be rejected for 30 out of \numdatasets{} datasets. In fact, deep learning models exhibit competitive performance for almost all datasets and regularly outperform classical methods. Further, they tend to perform comparatively better with increasing dataset size --- even if for some large datasets, classical models are indistinguishable --- and, nonetheless, show competitive performance for small datasets. The latter is of particular interest since it contradicts tribal knowledge that large datasets are needed to outperform local/classical methods and disputes the claim presented by \cite{statistical-vs-ml} since only a few thousands observations seem sufficient to outperform the classical models considered.

These statements are further supported by Figure~\ref{fig:relative-improvement} where we compare the best deep learning model $x_{\text{deep}}$ against the best classical model $x_{\text{class}}$ on each dataset. We compute the relative improvement given by deep learning models as $\frac{\text{nCRPS}(x_{\text{deep}})}{\text{nCRPS}(x_{\text{class}})} - 1$. Except for few outliers, the best deep learning model outperforms all classical models on 40 out of \numdatasets{} datasets (see Appendix~\ref{apx:benchmark-analysis} for a further analysis). Fitting a linear regression model on the relative improvements shows that the improvement by using deep learning models is only slightly amplified as the datasets grow in size.

We note, that we conflate deep learning and global models in our above discussion. \cite{locality-and-globality} show the general superiority of global models over local models both theoretically and empirically and our results confirm their findings further.

\section{Learning Multi-Objective Defaults} \label{sec:experiments}

The benchmark introduced in the previous section outlined how to acquire offline evaluations of forecasting methods on various datasets. 
However, in the presence of multiple conflicting objectives such as accuracy and latency, it is not clear how one could choose the best models. 
In this section, we first formalize the problem of learning good defaults from these evaluations for a single objective. Afterwards, we formally introduce multi-objective optimization and show how learning defaults can be extended to account for multiple objectives. In the end, we report experimental results obtained by using our method.

\subsection{Prerequisites}

In our setting, we consider an objective function $\objectivefunction : \modelspace \to \R^\numobjectives$ for a given task (in our case, a task is a dataset). The objective function maps any time series model $x \in \modelspace$ to $\numobjectives$ objectives (such as nCRPS, latency, etc.) that ought to be minimized. In addition, we assume that model evaluations on $\numtasks$ different but related tasks with objective functions $\objectivefunction^{(1)}, \ldots, \objectivefunction^{(\numtasks)}$ are available. The set of offline model evaluations is then given as
\begin{equation}
    \mathcal{D} = \bigcup_{j=1}^\numtasks \left\{\left(x^{(j)}_i, \bm{y}^{(j)}_i\right)\right\}_{i=1}^{\numevaluations_j} \qquad \text{with} \quad \bm{y}^{(j)}_i = \objectivefunction^{(j)}(x^{(j)}_i) \in \R^\numobjectives
\end{equation}
where $\numevaluations_j$ evaluations are available for task $j$ and $\bm{y}^{(j)}_i$ denotes the evaluation of $x_i^{(j)}$ on task $j$.

\paragraph{Learning Defaults.}
To learn a set of $\numrecommendations$ good default models for unseen datasets in the single-objective setting, \cite{pfisterer2021learning} propose to pick the models that minimize the joint error obtained when evaluating them on all datasets. This amounts to picking the set of models $\{x_1, \dots, x_\numrecommendations\} \subset \modelspace$ which minimizes the following objective:
\begin{equation}
    \min_{\{x_1, \dots, x_\numrecommendations\} \subset \modelspace}
    \sum_{j=1}^\numtasks ~\min_{1 \leq i \leq \numrecommendations} \objectivefunctioni{k}^{(j)}(x_i)~
    \label{eq:defaultmin}
\end{equation}
where $k$ denotes a single fixed objective (for instance, the classification error). Since the problem is NP-complete, a greedy heuristic with a provable approximation guarantee is used: it iteratively adds the model minimizing Eq.~\ref{eq:defaultmin} to the current selection. However, this approach only works for a single objective of choice $\objectivefunctioni{k}$ rather than taking all objectives $\objectivefunction$ (such as accuracy and latency) into account. In addition, it does not easily support the selection of ensembles since objectives can interact in differently when models are ensembled: while accuracy will likely increase by a small amount, the latencies of ensemble members need to be added up. Taking into account all possible ensembles of a given size $\numrecommendations$ would also blow up the combinatoric search when optimizing Eq.~\ref{eq:defaultmin}.

\paragraph{Multi-Objective Optimization.}
In multi-objective optimization, there generally exists no singular solution. Thus, when considering the objective function $\objectivefunction$, there exists no single optimal model, but rather a set of optimal models \citep{multiobjective-optimization}. A model $x\in \modelspace$ dominates another model $x' \in \modelspace$ ($x \prec_{\objectivefunction} x'$) if $\objectivefunctioni{i}(x) \leq \objectivefunctioni{i}(x')$ for all $i$ and there exists some $i$ such that $\objectivefunctioni{i}(x) < \objectivefunctioni{i}(x')$. The set of all non-dominated solutions (i.e. models) is denoted as the Pareto front $\paretoset{\objectivefunction}{\modelspace}$ whose members are commonly referred to as Pareto-optimal solutions:
\begin{equation}
    \paretoset{\objectivefunction}{\modelspace} = \{ x \in \modelspace \condition \neg \exists x' \in \modelspace : x' \prec_{\objectivefunction} x \}
\end{equation}
To quantify the quality of a set $\candidatespace = \{ x_1, \ldots, x_{\numrecommendations} \} \subset \modelspace$ of selected models, we use the hypervolume error \citep{hypervolume,hypervolume-survey}. Given a set of points $\mathcal{Y} \subset \R^{\numobjectives}$ and a reference point $\bm{r} \in \R^{\numobjectives}$, we first define the hypervolume as the Lebesgue measure $\Lambda$ of the dominated space:
\begin{equation}
    \hypervolume(\mathcal{Y}) = \Lambda\left( \{ \bm{q} \in \R^\numobjectives \condition \exists \bm{p} \in \mathcal{Y} : \bm{p} \le \bm{q} \land \bm{q} \le \bm{r} \} \right)
    \label{eq:hypervolume}
\end{equation}
In turn, the hypervolume error $\varepsilon$ of the set $\candidatespace$ is defined as the difference in hypervolume compared to the true Pareto front $\paretoset{\objectivefunction}{\modelspace}$:
\begin{equation}
    \varepsilon(\candidatespace) = \hypervolume\left(\left\{ \objectivefunction(x) \condition x \in \paretoset{\objectivefunction}{\modelspace} \right\}\right) - \hypervolume\left(\{\objectivefunction(x_1), \dots, \objectivefunction(x_\numrecommendations))\}\right) \ge 0
\end{equation}
Thus, the hypervolume error $\varepsilon(\candidatespace)$ reaches its minimum when $\candidatespace \supseteq \paretoset{\objectivefunction}{\modelspace}$. To account for different scales in the optimization objectives, we further standardize all objectives by quantile normalization \citep{quantile-normalization} such that they follow a uniform distribution ${\mathcal{U}(0, 1)}$ --- this is feasible as $\mathcal{X}$ is finite. While the normalization makes our comparisons robust to monotonic transformations of the objectives \citep{binois2020kalai}, it also allows us to choose the reference point for Eq.~\ref{eq:hypervolume} as $\bm{r} = \bm{1}_{\numobjectives}$.

\paragraph{Learning Multi-Objective Defaults.}
Using the hypervolume error, we can extend the minimization problem of Eq.~\ref{eq:defaultmin} to learn defaults in the multi-objective setting. For this, we propose the following minimization problem:
\begin{equation}
    \min_{\{x_1, \dots, x_\numrecommendations\} \subset \modelspace}  
    \sum_{j=1}^\numtasks 
    \varepsilon(\{x_1, \dots, x_\numrecommendations\})
    \label{eq:multiobjective-optimization-problem}
\end{equation}
In words, we seek to find a set of complementary model configurations $\{x_1, \dots, x_\numrecommendations\} \subset \modelspace$ that provide a good approximation of the Pareto front $\paretoset{\objectivefunction^{(j)}}{\modelspace}$ across tasks $j \in \{ 1, \ldots, \numtasks \}$.

\subsection{\ours{}} \label{sec:pareto-select}

We now introduce \ours{} which tackles the minimization of Eq.~\ref{eq:multiobjective-optimization-problem}. On a high-level, \ours{} works as follows: first, it fits a parametric surrogate model $\surrogate$ that predicts the performances of model configurations by leveraging the evaluations of $\mathcal{D}$. Then, it uses the surrogate $\surrogate$ to estimate the objectives for all models in $\modelspace$ and applies the non-dominated sorting algorithm on the objectives to select a list of default models.

The surrogate model $\surrogate$ is trained by attempting to correctly rank model configurations within each task in the available offline evaluations $\mathcal{D}$. For each model $x \in \modelspace$, $\surrogate$ outputs a vector $\tilde{y} \in \R^m$ for $m$ optimization objectives. Models can then be ranked for each objective independently. In order to fit $\surrogate$ on the data $\mathcal{D}$, we use listwise ranking \citep{listmle} and apply linear discounting to focus on correctly identifying the top configurations similar to \cite{nettack}. The corresponding loss function can be found in Appendix~\ref{apx:surrogates}.

For the parametric surrogate model $\surrogate$, we choose a simple MLP similar to \cite{pmlr-v119-salinas20a}. We use two hidden layers of size 32 with LeakyReLU activations after each hidden layer and apply a weight decay of 0.01 as regularization. Importantly, we do not tune these hyperparameters. Predictive performances for every model can eventually be computed as
\begin{equation}
    \tilde{\mathcal{Y}} = \{\surrogate(x) ~|~ x\in \modelspace\}
\end{equation}
We note that for minimizing Eq.~\ref{eq:multiobjective-optimization-problem}, either regression or ranking objectives can be used: since we apply quantile normalization to compute the hypervolume, the magnitude of the values predicted by $\surrogate$ is irrelevant. However, we find the ranking setting to be superior to regression for finding good default models (see Appendix~\ref{apx:surrogates} for more details). 

\paragraph{Multi-Objective Sorting.}
The ``best'' configurations are then determined by applying the non-dominated sorting algorithm (NDSA) --- an extension of sorting to the multi-dimensional setting \citep{nsga,multiobjective-optimization} --- to the predictive performances $\tilde{\mathcal{Y}}$. Figure~\ref{fig:non-dominated-sorting} provides an illustration of the sorting procedure. NDSA can be described as a two-level procedure. First, it partitions the configurations to be sorted in multiple layers by iteratively computing the Pareto fronts, then it applies a sorting function $\sort$ to rank the configurations in each layer:
\begin{equation}
    X_1 = \sort\left(\paretoset{\surrogate}{\modelspace}\right), \quad
    X_{i+1} = \sort\left(\paretoset{\surrogate}{\modelspace \mathbin{\big\backslash} \bigcup_{j\leq i} X_j}\right) 
    \label{eq:ndsort}
\end{equation}
We choose $\sort$ to compute an $\epsilon$-net \citep{epsilon-net} for which there exists a simple greedy algorithm. When sorting a set $X$, the $\epsilon$-net sorting operation yields an order $\sort(X) = [x_1, \ldots, x_{|X|}]$. The first element $x_1$ is chosen randomly from $X$ and $x_{i+1}$ is defined iteratively as the item which is farthest from the current selection in terms of Euclidean distance, formally:
\begin{equation}
    x_{i+1} = \argmax_{x \in X} \min_{j \le i} \| \objectivefunction(x) - \objectivefunction(x_j) \|_2
\end{equation}
The final ordering is eventually obtained by concatenating the sorted layers. While previous work followed different approaches for the $\sort$ operation in the NDSA algorithm \citep{nsga,nsga-ii}, we leverage the $\epsilon$-net since \cite{multi-objective-tuning} highlighted its theoretical guarantees and \cite{mo-asynchronous-halving} provided empirical evidence for its good performance. 

The pseudo-code of \ours{} is given in Algorithm~\ref{alg:ours} in the appendix. In the case where predictions of the surrogate are error-free, the recommendations of the algorithm can be guaranteed to be perfect with zero hypervolume error.

\begin{proposition}
Assume that $\numrecommendations \geq |\paretoset{\objectivefunction}{\modelspace}|$ and for all $x\in \modelspace$, $\surrogate(x) = \objectivefunction(x)$. Then, $\varepsilon(\{x_1, \dots, x_\numrecommendations\}) = 0$ where $x_1, \dots, x_\numrecommendations$ are the first $\numrecommendations$ models selected by our method.
\end{proposition}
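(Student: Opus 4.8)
The plan is to show that under the stated assumptions, the first $\numrecommendations$ models chosen by \ours{} contain the entire true Pareto front $\paretoset{\objectivefunction}{\modelspace}$, from which zero hypervolume error follows immediately by the definition of $\varepsilon$.

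\textbf{Step 1: The surrogate's Pareto front equals the true one.} Since $\surrogate(x) = \objectivefunction(x)$ for every $x \in \modelspace$, the dominance relation induced by the surrogate coincides with $\prec_{\objectivefunction}$. Hence the first layer of the non-dominated sorting algorithm satisfies $\paretoset{\surrogate}{\modelspace} = \paretoset{\objectivefunction}{\modelspace}$. In particular, $X_1$ as defined in Eq.~\eqref{eq:ndsort} is a permutation of $\paretoset{\objectivefunction}{\modelspace}$, and $|X_1| = |\paretoset{\objectivefunction}{\modelspace}|$.

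\textbf{Step 2: The first $\numrecommendations$ selected models cover the Pareto front.} \ours{} outputs models by concatenating the sorted layers $X_1, X_2, \dots$ in order, so the first $|X_1|$ models returned are exactly the elements of $X_1 = \paretoset{\objectivefunction}{\modelspace}$ (in $\epsilon$-net order). Since $\numrecommendations \geq |\paretoset{\objectivefunction}{\modelspace}| = |X_1|$, the set $\{x_1, \dots, x_\numrecommendations\}$ of the first $\numrecommendations$ recommendations satisfies $\{x_1, \dots, x_\numrecommendations\} \supseteq \paretoset{\objectivefunction}{\modelspace}$. (If $\numrecommendations > |X_1|$, the remaining slots are filled with dominated models from later layers, which cannot decrease the hypervolume.)

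\textbf{Step 3: Zero hypervolume error.} By definition, $\varepsilon(\candidatespace) = \hypervolume(\{\objectivefunction(x) : x \in \paretoset{\objectivefunction}{\modelspace}\}) - \hypervolume(\{\objectivefunction(x_1), \dots, \objectivefunction(x_\numrecommendations)\})$. Because the dominated region of a point set is monotone under inclusion and every point $\objectivefunction(x)$ with $x \in \modelspace$ is dominated by some point of $\objectivefunction(\paretoset{\objectivefunction}{\modelspace})$, adding the extra (dominated) models to the Pareto front does not enlarge the dominated region; combined with $\{x_1,\dots,x_\numrecommendations\} \supseteq \paretoset{\objectivefunction}{\modelspace}$ from Step 2, the two hypervolumes are equal, so $\varepsilon(\{x_1, \dots, x_\numrecommendations\}) = 0$. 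Since the objective in Eq.~\eqref{eq:multiobjective-optimization-problem} sums this quantity over tasks (and the assumption $\surrogate = \objectivefunction$ is taken per task), the conclusion holds.

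\textbf{Main obstacle.} The argument is essentially bookkeeping; the only subtlety worth stating carefully is that the $\epsilon$-net ordering within a layer is irrelevant here — it only reorders elements of $X_1$, and since $\numrecommendations \geq |X_1|$ we take all of them regardless of order. A cleaner proof would avoid relying on any property of $\sort$ at all, observing only that $\sort$ is a permutation (hence the first $|X_1|$ outputs are precisely $\paretoset{\objectivefunction}{\modelspace}$); this keeps the proposition independent of the particular choice of the sorting subroutine.
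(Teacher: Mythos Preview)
Your proof is correct and follows essentially the same route as the paper's: show $\paretoset{\surrogate}{\modelspace} = \paretoset{\objectivefunction}{\modelspace}$ from the assumption, observe that the first layer $X_1$ therefore equals the true Pareto front and is fully contained in $\{x_1,\dots,x_\numrecommendations\}$ since $\numrecommendations \geq |X_1|$, and conclude $\varepsilon = 0$ because any superset of the Pareto front has zero hypervolume error. Your additional remarks (irrelevance of the $\epsilon$-net order, monotonicity of the dominated region) make explicit what the paper leaves implicit; the final sentence about summing over tasks in Eq.~\eqref{eq:multiobjective-optimization-problem} is unnecessary since the proposition concerns a single task.
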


\begin{proof}
By Eq.~\ref{eq:ndsort}, the first $k$ recommendations of Pareto Select are such that $\{x_1, \dots, x_k\} = \paretoset{\surrogate}{\modelspace}$ where $k = |\paretoset{\surrogate}{\modelspace}|$. Since for all $x\in \modelspace$, $\surrogate(x) = \objectivefunction(x)$, we get that $\paretoset{\surrogate}{\modelspace} = \paretoset{\objectivefunction}{\modelspace}$. 
It follows that for any $k \leq \numrecommendations$, $\paretoset{\objectivefunction}{\modelspace} = \{x_1, \dots, x_k\} \subset \{x_1, \dots, x_\numrecommendations\}$ and consequently $\varepsilon({\{x_1, \dots, x_\numrecommendations\}}) = 0$ since the hypervolume error of any set of points containing the Pareto front is zero.
\end{proof}

\begin{figure}[t]
    \centering
    \includegraphics[width=\textwidth]{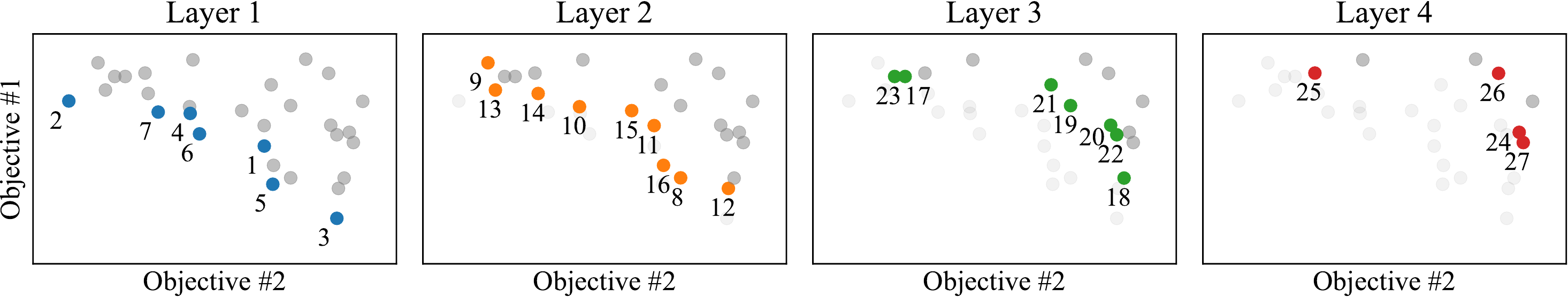}
    \caption{Illustration of non-dominated sorting. The layers show the partitioning of the data in Pareto fronts. The numbers depict the overall rank by computing the $\epsilon$-net within each layer.}
    \label{fig:non-dominated-sorting}
\end{figure}

\subsection{Results} \label{sec:experiments-results}

In order to evaluate our approach, we leverage the data collected with the benchmark presented in Section~\ref{sec:benchmark}. For evaluation, we perform leave-one-out-cross-validation where we use each dataset one after the other as the test dataset and estimate the parameters $\surrogateparams$ of our parametric surrogate model $\surrogate$ on the remaining $43$ datasets. The multi-objective setting in the following experiments aims to minimize latency as well as nCRPS.

\begin{figure}[t]
    \centering
    \begin{floatrow}
        \capbfigbox{
            \includegraphics[width=0.30\textwidth]{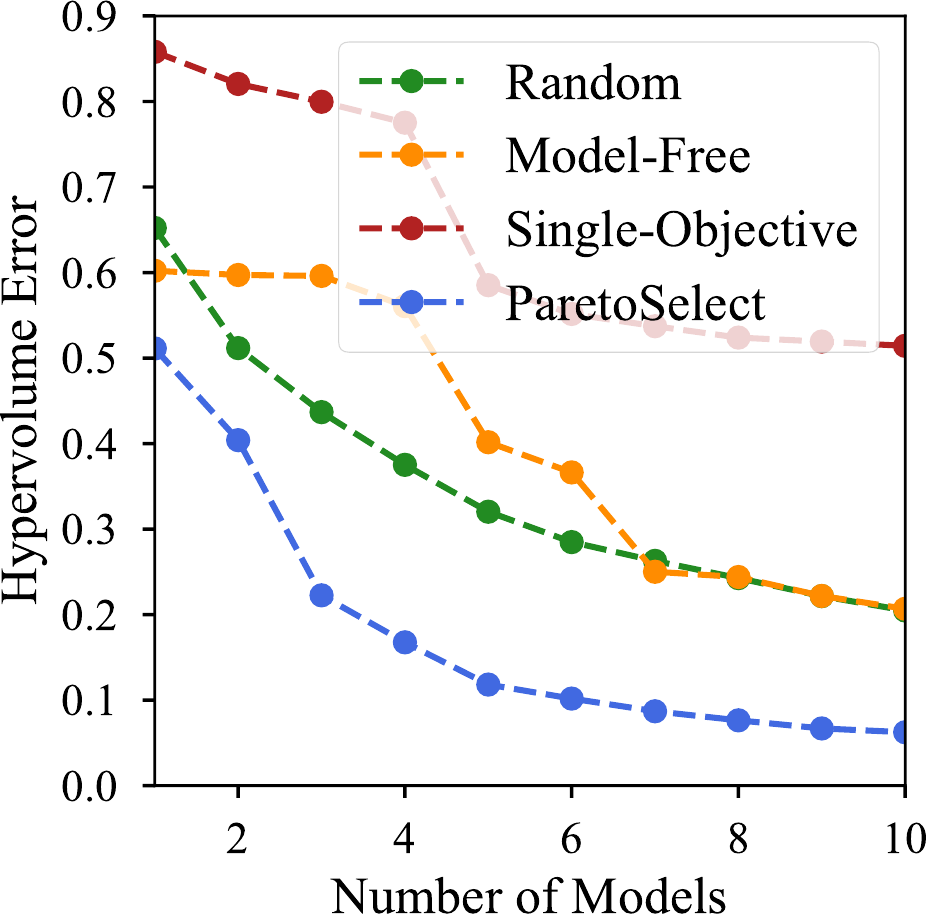}
        }{
            \caption{Hypervolume error when iteratively choosing models via different model selection approaches. Errors are averaged over all \numdatasets{} benchmark datasets and five random seeds.}
            \label{fig:hypervolumes}
        }
        \capbfigbox{
            \includegraphics[width=0.61\textwidth]{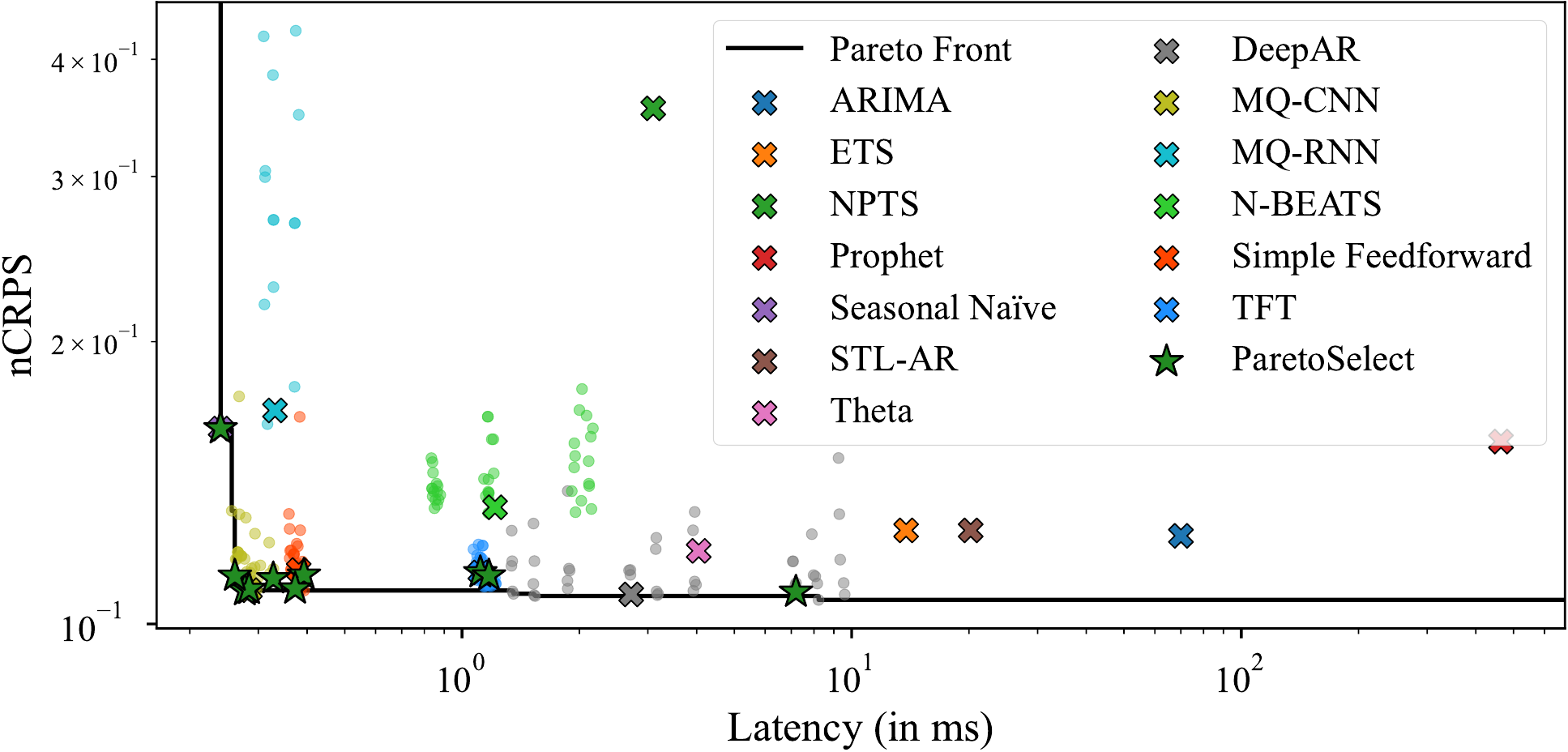}
        }{
            \caption{Visualization of forecast latency and nCRPS for models trained and evaluated on the ``M4 Yearly'' dataset. Each color describes a family of models with dots representing different hyperparameter configurations and training times (checkpoints for deep learning models). Crosses show the default hyperparameter configuration and the maximum training time for the dataset.}
            \label{fig:scatterplot}
        }
    \end{floatrow}
\end{figure}

In Figure~\ref{fig:hypervolumes}, we report the average hypervolume obtained for \ours{} and several baselines, averaged over all \numdatasets{} test datasets. \ours{} picks the top elements of the non-dominated sort on the surrogate's predictive performances. As baselines, we consider a variant of \ours{} which considers the nCRPS as the single objective (\textbf{Single-Objective}), the approach of \cite{pfisterer2021learning} which greedily picks models to minimize the joint nCRPS on the training data (\textbf{Greedy Model-Free}), and random search over the entire model space of size $|\modelspace| = 247$ (\textbf{Random}). The hypervolume obtained by \ours{} with 10 default models is small ($\approx 0.06$) compared to the best baseline that reaches $\approx 0.20$. Further, random search requires a total of 27 models to be on par with the hypervolume of only 10 models selected via our method.

Figure~\ref{fig:scatterplot} depicts the top $\numrecommendations = 10$ recommendations produced by \ours{} when predicting default models on a single dataset. The default models cover the true Pareto front well: our method is able to pick models with low latency, low nCRPS, and a good trade-off between these two objectives.

\subsection{Multi-Objective Ensembling} \label{sec:multi-objective-ensembling}

\begin{figure}[t]
    \centering
    \includegraphics[width=0.8\textwidth]{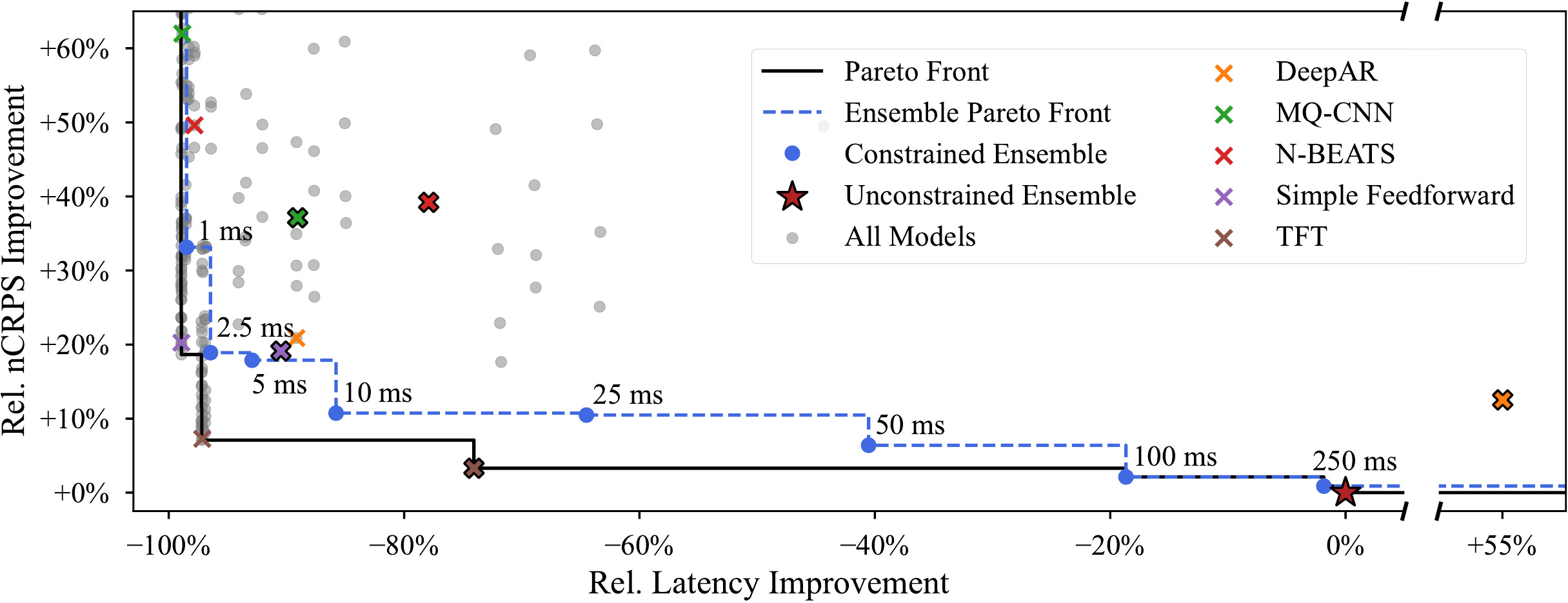}
    \caption{Comparison of individual forecasting models, hyper-ensembles and latency-constrained ensembles. The axes show the average relative latency and nCRPS compared to the unconstrained ensemble. Results are averaged across all \numdatasets{} benchmark datasets. Light crosses show default configurations of deep learning models, bold crosses show their corresponding hyper-ensembles. Classical models are not colored as most can only be found (far) beyond the range of the plot.}
    \label{fig:contrained-ensembles}
\end{figure}

As shown in Section \ref{sec:benchmark_analysis}, ensembling models yields significant gains in accuracy but comes at the cost of considerable latency. Since a massively large ensemble model is of no practical use, it is critical to be able to select an ensemble with a latency that is acceptable for an application. Thus, we consider ensembles under different latency constraints. 

For a latency constraint $c$, we build an ensemble with $\numrecommendations \le 10$ members $x_1, \dots, x_\numrecommendations$ by iteratively picking the model with best nCRPS such that the ensemble latency constrain $c$ is still met. To generate ensemble predictions, we combine the forecasts of the base models $x_1, \ldots, x_{\numrecommendations}$ by averaging all 10-quantiles $\{ 0.1, \ldots, 0.9 \}$ for each time series and time step independently. Albeit the performances of the base models might differ significantly, our experiments showed that more complicated, non-uniform weighting schemes yielded less accurate forecasts on average. This aligns with current literature showing that simple averaging is often most effective \citep{theory-and-practice}.

Figure~\ref{fig:contrained-ensembles} shows the performance of different latency-constrained ensembles, relative to the performance of an unconstrained ensemble that picks the $\numrecommendations = 10$ models with lowest predicted nCRPS. As expected, higher latency constraints $c$ allows the ensemble to perform better. For $c \ge$~100~ms, the constrained ensemble recovers the Pareto front and outperforms all individual models $x \in \modelspace$ as well as ensembles. The small gap to the Pareto front for constraints $c \le$~50~ms can be attributed to multiple effects. First, the nCRPS values predicted by the surrogate $\surrogate$ are not optimal. Second, models may not satisfy $c$ on every dataset although they have a lower latency on average (e.g. the default TFT configuration violates $c =$~2.5~ms on 17/\numdatasets{} datasets). 
Table~\ref{tab:model-comparison} additionally reports performance metrics of latency-constrained ensembles: for instance, the ensemble for $c =$~10~ms outperforms all individual base models in terms of average nCRPS rank while the ensemble for $c =$~100~ms outperforms all hyper-ensembles as well.

\section{Conclusion} \label{sec:conclusion}

In this work, we have presented (i) a new benchmark for evaluating forecasting methods in terms of multiple metrics and (ii) \ours{}, a new algorithm that can learn default model configurations in a multi-objective setting. In future work, we will consider applying those techniques to other domains such as computer vision where offline evaluations of multiple datasets were recently made available by \cite{dong2020nasbench201}. Additionally, future research may extend the presented predictive surrogate model to the case of ensembles: this would enable \ours{} to be used directly to select members for ensembles of any latency.

\newpage

\section{Reproducibility}

Public datasets and method implementations were used to ensure the reproducibility of our experiments. In Appendix~\ref{apx:models}, we provide all hyperparameters used for training the benchmark forecasting models via GluonTS. The source and processing of each dataset as well as the procedure for generating data splits is detailed in Appendix~\ref{apx:datasets}. We also provide details on the surrogate model used and a comparison with other choices in Appendix~\ref{apx:surrogates}. The pseudocode of \ours{} is given in Appendix~\ref{apx:pseudocode}. As a source of truth, we share the entire code used to run the benchmark and its evaluation with the submission. This code as well as the generated evaluations will be released with the paper.

\bibliography{
    bibliography/books,
    bibliography/datasets,
    bibliography/frameworks,
    bibliography/models,
    bibliography/papers
}
\bibliographystyle{tmlr}

\newpage
\appendix
\section{Models} \label{apx:models}

\begin{table}
    \centering
    \footnotesize
    \caption{Overview of all models considered and their associated hyperparameters. The upper seven models do not require training. The lower six models are deep learning models. For all deep learning models, three context lengths are considered as well as three hyperparameter configurations (except for MQ-RNN) that result in small, medium and large models, respectively. Hyperparameters that are not augmented keep their default values. The hyperparameters of the ``default'' models correspond to the medium size.}
    \begin{tabular}{lccc}
    \toprule
    & \textbf{Context Lengths} & \multicolumn{2}{c}{\textbf{Hyperparameters}} \\
    \midrule
    \textbf{ARIMA} \citep{forecasting-principles} & --- & \multicolumn{2}{c}{---} \\
    \textbf{ETS} \citep{forecasting-principles} & --- & \multicolumn{2}{c}{---} \\
    \textbf{NPTS} \citep{npts} & --- & \multicolumn{2}{c}{---} \\
    \textbf{Prophet} \citep{prophet} & --- & \multicolumn{2}{c}{---} \\
    \textbf{Seasonal Naïve} \citep{forecasting-principles} & --- & \multicolumn{2}{c}{---} \\
    \textbf{STL-AR} \citep{stlar} & --- & \multicolumn{2}{c}{---} \\
    \textbf{Theta} \citep{theta} & --- & \multicolumn{2}{c}{---} \\
    \midrule
    \midrule
    \multirow{4}{*}{\textbf{DeepAR} \citep{deepar}} & \multirow{4}{*}{1, 2, 4} & \textbf{\# Layers} & \textbf{\# Cells} \\ \cmidrule(lr){3-4}
    & & 1 & 20 \\
    & & 2 & 40 \\
    & & 4 & 80 \\
    \midrule
    \multirow{4}{*}{\textbf{MQ-CNN} \citep{mqrnn}} & \multirow{4}{*}{2, 4, 8} & \textbf{\# Filters} & \textbf{Kernel Sizes} \\ \cmidrule(lr){3-4}
    & & 20 & 3, 3, 2 \\
    & & 30 & 7, 3, 3 \\
    & & 40 & 14, 7, 3 \\
    \midrule
    \textbf{MQ-RNN} \citep{mqrnn} & 2, 4, 8 & \multicolumn{2}{c}{---} \\
    \midrule
    \multirow{4}{*}{\textbf{N-BEATS} \citep{nbeats}} & \multirow{4}{*}{1, 2, 4} & \textbf{\# Stacks} & \textbf{\# Blocks} \\ \cmidrule(lr){3-4}
    & & 4 & 5 \\
    & & 30 & 1 \\
    & & 30 & 2 \\
    \midrule
    \multirow{4}{*}{\textbf{Simple Feedforward}} & \multirow{4}{*}{1, 2, 4} & \textbf{Hidden Dim} & \textbf{\# Layers} \\ \cmidrule(lr){3-4}
    & & 30 & 1 \\
    & & 40 & 2 \\
    & & 80 & 3 \\
    \midrule
    \multirow{4}{*}{\textbf{Temporal Fusion Transformer} \citep{tft}} & \multirow{4}{*}{1, 2, 4} & \textbf{Hidden Dim} & \textbf{\# Heads} \\ \cmidrule(lr){3-4}
    & & 16 & 2 \\
    & & 32 & 4 \\
    & & 64 & 8 \\
    \bottomrule
\end{tabular}

    \label{tab:model-overview}
\end{table}

Table~\ref{tab:model-overview} provides an overview over all benchmark models along with hyperparameters considered. In addition to three hyperparameter settings per deep learning model (except for MQ-RNN), three different context lengths $\contextlength$ are considered. These represent multiples of the forecast horizon $\horizon$ and are thus dataset-dependent. Eventually, this results in 9 model configurations per deep learning model (and 3 for MQ-RNN).

The implementation of all models are taken from GluonTS \citep{gluonts}. All deep learning models are implemented in MXNet \citep{mxnet}. ARIMA, ETS, STL-AR, and Theta are available in GluonTS as well but forward computations to the R \textit{forecast} package \citep{rforecast}.

\paragraph{Model Evaluation.} 
To measure the accuracy of forecasting models, we employ the \emph{continuous ranked probability score} (CRPS) \citep{crps-original,crps}. Given the quantile function $(\quantilefunc^{(i)}_{t})^{-1}$ of the forecast probability distribution in Eq.~\eqref{eq:forecasting-distribution} for a time series $\timeseries^{(i)}$ at time step $t$, the CRPS is defined as
\begin{equation}
    \crpsfunc\left((\quantilefunc^{(i)}_{t})^{-1}, \timeseriest^{(i)}\right) = \int_{0}^{1}  \Lambda_{\alpha}\left( (\quantilefunc^{(i)}_{t})^{-1}, \timeseriest^{(i)} \right) \, \text{d}\alpha
\end{equation}
where $\Lambda_{\alpha}$ is the quantile loss (or pinball loss) defined over a quantile level $0 < \alpha < 1$:
\begin{equation}
    \Lambda_{\alpha}\left(F^{-1}, z\right) = \left(\alpha - \mathbb{I}{[z < F^{-1}(\alpha)]} \right) \left(z - F^{-1}(\alpha)\right)
\end{equation}
Here, $\mathbb{I}$ is the indicator function. To compute an aggregated score for a model forecasting multiple time series and time steps, we compute a normalized CRPS (nCRPS):
\begin{equation}
    \ncrpsfunc\left( \left\{ (\quantilefunc^{(i)}_{t})^{-1} \right\}_{i, t}, \timeseriesdataset \right) = \frac{\sum_{i, t}{\crpsfunc\left((\quantilefunc^{(i)}_{t})^{-1}, \timeseriest^{(i)}\right)}}{\sum_{i,t}{|\timeseriest^{(i)}|}}
\end{equation}
We approximate the nCRPS using the 10-quantiles $\alpha \in \{0.1, 0.2, 0.3, \dots, 0.9\}$.
While we focus on nCRPS for the evaluations in this paper, the benchmark dataset that we release contains four additional forecast accuracy metrics (MAPE, sMAPE, NRMSE, ND).

\section{Datasets} \label{apx:datasets}

\begin{table}[t]
    \centering
    \footnotesize
    \caption{Statistics of the benchmark datasets. Frequencies read as follows: {MIN -- minutely}, {H -- hourly}, {D -- daily}, {B -- business daily}, {W -- weekly}, {M -- monthly}, {Q -- quarterly}, {Y -- yearly}. Number of observations are calculated from the training data.}
    \begin{tabular}{lcc|rrr}
    \toprule
    {} & \textbf{Freq.} & \textbf{Horizon} & \textbf{\# Series} & \textbf{Avg. Length} & \textbf{\# Observations} \\
    \midrule
    \textbf{Australian Electricity Demand} &         30 MIN &               48 &                  5 &              231,005 &                1,155,024 \\
    \textbf{Bitcoin}                       &              D &               30 &                 17 &                4,134 &                   70,279 \\
    \textbf{CIF 2016}                      &              M &               12 &                 72 &                   87 &                    6,244 \\
    \textbf{COVID Deaths}                  &              D &               30 &                227 &                  182 &                   41,314 \\
    \textbf{Car Parts}                     &              M &               12 &              2,504 &                   39 &                   97,656 \\
    \textbf{Corporación Favorita}          &              D &               16 &            171,091 &                1,089 &              186,310,518 \\
    \textbf{Dominick}                      &              W &                8 &            115,163 &                  157 &               18,137,043 \\
    \textbf{Electricity}                   &              H &               24 &                321 &               21,044 &                6,755,124 \\
    \textbf{Exchange Rate}                 &              B &               30 &                  8 &                6,071 &                   48,568 \\
    \textbf{Fred-MD}                       &              M &               12 &                107 &                  716 &                   76,612 \\
    \textbf{Hospital}                      &              M &               12 &                767 &                   72 &                   55,224 \\
    \textbf{KDD 2018}                      &              H &               48 &                270 &               10,850 &                2,929,404 \\
    \textbf{London Smart Meters}           &         30 MIN &               48 &              5,559 &               29,903 &              166,231,248 \\
    \textbf{M1 Monthly}                    &              M &               18 &                617 &                   73 &                   44,892 \\
    \textbf{M1 Quarterly}                  &              Q &                8 &                203 &                   41 &                    8,320 \\
    \textbf{M1 Yearly}                     &              Y &                6 &                181 &                   19 &                    3,429 \\
    \textbf{M3 Monthly}                    &              M &               18 &              1,428 &                   99 &                  141,858 \\
    \textbf{M3 Other}                      &              Q &                8 &                174 &                   69 &                   11,933 \\
    \textbf{M3 Quarterly}                  &              Q &                8 &                756 &                   41 &                   30,956 \\
    \textbf{M3 Yearly}                     &              Y &                6 &                645 &                   22 &                   14,449 \\
    \textbf{M4 Daily}                      &              D &               14 &              4,227 &                2,357 &                9,964,658 \\
    \textbf{M4 Hourly}                     &              H &               48 &                414 &                  854 &                  353,500 \\
    \textbf{M4 Monthly}                    &              M &               18 &             48,000 &                  216 &               10,382,411 \\
    \textbf{M4 Quarterly}                  &              Q &                8 &             24,000 &                   92 &                2,214,108 \\
    \textbf{M4 Weekly}                     &              W &               13 &                359 &                1,022 &                  366,912 \\
    \textbf{M4 Yearly}                     &              Y &                6 &             22,974 &                   31 &                  707,265 \\
    \textbf{M5}                            &              D &               28 &             30,490 &                1,885 &               57,473,650 \\
    \textbf{NN5}                           &              D &               56 &                111 &                  735 &                   81,585 \\
    \textbf{Pedestrian Count}              &              H &               48 &                 66 &               47,412 &                3,129,178 \\
    \textbf{Restaurant}                    &              D &               39 &                823 &                  321 &                  263,817 \\
    \textbf{Rideshare}                     &              H &               48 &              2,304 &                  493 &                1,135,872 \\
    \textbf{Rossmann}                      &              D &               48 &              1,115 &                  864 &                  963,689 \\
    \textbf{San Francisco Traffic}         &              H &               48 &                862 &               17,496 &               15,081,552 \\
    \textbf{Solar}                         &              H &               24 &                137 &                7,009 &                  960,233 \\
    \textbf{Taxi}                          &         30 MIN &               24 &              1,214 &                1,488 &                1,806,432 \\
    \textbf{Temperature Rain}              &              D &               30 &             32,053 &                  695 &               22,276,835 \\
    \textbf{Tourism Monthly}               &              M &               24 &                366 &                  275 &                  100,496 \\
    \textbf{Tourism Quarterly}             &              Q &                8 &                427 &                   92 &                   39,128 \\
    \textbf{Tourism Yearly}                &              Y &                4 &                518 &                   21 &                   10,685 \\
    \textbf{Vehicle Trips}                 &              D &               30 &                314 &                  100 &                   31,361 \\
    \textbf{Walmart}                       &              W &               39 &              2,934 &                  102 &                  298,509 \\
    \textbf{Weather}                       &              D &               30 &              3,010 &               14,266 &               42,941,700 \\
    \textbf{Wiki}                          &              D &               30 &              9,535 &                  762 &                7,265,670 \\
    \textbf{Wind Farms}                    &            MIN &               60 &                313 &              513,442 &              160,707,330 \\
    \bottomrule
\end{tabular}

    \label{tab:dataset-statistics}
\end{table}

Our benchmark provides \numdatasets{} datasets which were obtained from multiple sources. 16 datasets were obtained though GluonTS \citep{gluonts}, 24 datasets are taken from the Monash Time Series Forecasting Repository \citep{monash}, and the remaining 4 datasets were originally published as part of Kaggle\footnote{\url{https://www.kaggle.com/}} forecasting competitions. Table~\ref{tab:dataset-statistics} provides basic statistics about all datasets.

\subsection{Dataset Descriptions} \label{apx:data-descriptions}

In the following, we provide brief descriptions of all datasets and provide their sources. We link the original source (if possible) along with any work which pre-processed the data (if applicable). The datasets ``Corporación Favorita'', ``Restaurant'', ``Rossmann'', and ``Walmart'' --- which were obtained from Kaggle --- are processed by ourselves.

\textbf{Australian Electricity Demand} \citep{tsibbledata,monash} provides the 30-minute electricity demand of five Australian states (New South Wales, Queensland, South Australia, Tasmania, Victoria). Time series range from January 2002 to April 2015.

\textbf{Bitcoin} \citep{monash} provides over a dozen of potential influencers of the price of \textit{Bitcoin}. Among others, these influencers are daily hash rate, block size, or mining difficulty. We exclude one time series from the original dataset as its absolute values are $\ge 10^{18}$. Data is provided from January 2009 to July 2021.

\textbf{CIF 2016} \citep{cif2016,monash} is the dataset from the ``Computational Intelligence in Forecasting'' competition in 2016. One third of the time series originate from the banking sector while the remaining two thirds are generated artificially. In the competition, the time series have two different prediction horizons, however, GluonTS forces us to adopt only a single horizon (for which we choose the most common one).

\textbf{COVID Deaths} \citep{covid-deaths,monash} provides the daily COVID-19 death counts of various countries between January 22 and August 21 in the year 2020.

\textbf{Car Parts} \citep{expsmooth,monash} provides intermittent time series of car parts sold monthly by a US car company between January 1998 and April 2002. Periods in which no parts are sold have a value of zero.

\textbf{Corporación Favorita} \citep{corporacion-favorita} contains the daily unit sales of over 4,000 items at 54 different stores of the supermarket company \textit{Corporación Favorita} in Ecuador. Unit sales were manually set to 0 on days where there was no data available. Data ranges from January 2013 to August 2017. All covariates that are provided in the Kaggle competition are discarded.

\textbf{Dominick} \citep{dominick,monash} contains time series with the weekly profits of numerous stock keeping units (SKUs). The data is obtained from the grocery store company \textit{Dominick's} over a period of seven years.

\textbf{Electricity} \citep{uci,gluonts} is comprised of the hourly electricity consumption (in kWh) of hundreds of households between January 2012 and June 2014.

\textbf{Exchange Rate} \citep{exchange-rate,gluonts} provides the daily exchange rates (on weekdays) between the US dollar and the currencies of eight countries (Australia, Great Britain, Canada, Switzerland, China, Japan, New Zealand, Singapore) in the period from 1990 to 2013.

\textbf{Fred-MD} \citep{fred-md,monash} contains monthly time series of macro-economic indicators (e.g. interest rates, employment rates, \ldots{}) from the Federal Reserve Bank since January 1959 (until September 2019).

\textbf{Hospital} \citep{expsmooth,monash} provides the monthly patient counts for various products that are related to medical problems between 2000 and 2007.

\textbf{KDD 2018} \citep{kdd-2018,monash} is the dataset used by the KDD Cup 2018. It contains hourly time series with air quality levels (represented by various chemical compounds) measured by stations in Beijing and London between January 2017 and April 2018.

\textbf{London Smart Meters} \citep{london-smart-meters,monash} was originally published on Kaggle and contains the half-hourly (electrical) energy consumption (in kWh) of thousands of households in the period from November 2011 to February 2014.

\textbf{M1} \citep{m1,monash} datasets are taken from the M1 competition in 1982. Time series have varying start- and end dates and cover a wide semantic spectrum of domains (demographics, micro, macro, industry).

\textbf{M3} \citep{m3,gluonts} datasets are obtained from the M3 competition. Across frequencies, time series were collected from six different domains: demographics, micro, macro, industry, finance, and other. For the time series with no specified frequency, we assume a quarterly frequency (as this is the default in GluonTS).

\textbf{M4} \citep{m4,gluonts} datasets are taken from the prestigious M4 competition. The time series are collected from the same domains as in the M3 competition.

\textbf{M5} \citep{m5,gluonts} is the dataset from the M5 competition and contains daily unit sales of 3,049 products across 10 \textit{Walmart} stores located in California, Texas, and Wisconsin. While covariates are available (e.g. product prices or special events), none of the forecasting methods we consider makes use of them. Sales data is provided from January 2011 to April 2016.

\textbf{NN5} \citep{nn5,monash} data was used in the NN5 competition run in 2008. The time series provide daily amounts of cash withdrawals at dozens of ATMs in the UK from March 1996 to May 1998.

\textbf{Pedestrian Count} \citep{pedestrian-count,monash} provides hourly pedestrian counts in Melbourne from May 2009 to May 2020. The counts are captured by sensors scattered in the city.

\textbf{Restaurant} \citep{restaurant} provides the number of daily visitors of hundreds of restaurants in Japan between January 2016 and April 2017, as tracked by the \textit{AirREGI} system. As dates for which restaurants are closed do not provide visitor numbers, we impute zeros for these missing values. The Kaggle competition where this dataset is obtained from also provides data about reservations but this data remains unused for our purposes.

\textbf{Rideshare} \citep{monash} is comprised of hourly time series representing various attributes related to services offered by \textit{Uber} in \textit{Lyft} in New York. Time series include e.g. minimum, maximum, and average price, or the maximum distance traveled --- grouped by starting location and provider. Data is available for the period of a month, starting in late November 2018.

\textbf{Rossmann} \citep{rossmann} provides daily sales counts at hundreds of different stores of the \textit{Rossmann} drug store chain. Just like for the M5 dataset, covariates are available (e.g. distance to competition, special events, or discounts) but not used by any method. Data is available from January 2013 until August 2015.

\textbf{San Francisco Traffic} \citep{san-francisco-traffic,monash} contains hourly occupancy rates of freeways in the San Francisco Bay area. Data is available for two full years, starting from January 1, 2015.

\textbf{Solar} \citep{solar,gluonts} is comprised of the hourly power consumption of dozens of photovoltaic power stations in the state of Alabama in the year 2006.

\textbf{Taxi} \citep{taxi,taxi-paper,gluonts} contains the number of taxi rides in hundreds of locations around New York in 30 minute windows. Training data contains data from January 2015 while test data contains data from January in the subsequent year.

\textbf{Temperature Rain} \citep{monash} provides the daily temperature observations and rain forecasts gather from 422 weather stations across Australia. Data ranges from May 2015 through April 2017.

\textbf{Tourism} \citep{tourism,monash} datasets come from a Kaggle forecasting competition where time series were supplied by tourism bodies from Australia, Hong Kong, and New Zealand as well as academics. Monthly data ranges from 1979 through 2007, quarterly data from from 1975 through 2007, and yearly data from 1960 through 2008.

\textbf{Vehicle Trips} \citep{vehicle-trips,monash} contains the daily number of trips served by hundreds of for-hire vehicle (FHV) companies (such as \textit{Uber}). Data ranges from January to September 2015. 

\textbf{Walmart} \citep{walmart} contains data about \textit{Walmart} store sales similarly to the M5 competition. Data is provided as the weekly sales (in USD) across 45 stores and 81 departments in different regions and ranges from February 2010 to November 2012. Just like for many of the other Kaggle competitions, covariates are available, yet unused.

\textbf{Weather} \citep{bomrang,monash} is comprised of daily data collected from hundreds of weather stations in Australia. Collected data is the amount of rain, the minimum and maximum temperature, and the amount of solar radiation.

\textbf{Wiki} \citep{spline-quantile-functions,gluonts} similarly provides daily pages views for several thousand Wikipedia pages in the period from January 2012 to July 2014.

\textbf{Wind Farms} \citep{wind-farms,monash} contains time series with the minutely power production of hundreds of wind farms in Australia. Data is available for a period of one year, starting in August 2019.

\subsection{Data Preparation} \label{apx:data-preparation}

For all of the datasets in Table~\ref{tab:dataset-statistics} except for ``Taxi'', we perform similar preprocessing. Let $\timeseriesdataset = \{\timeseries^{(i)}_{1:\timeserieslength_i}\}_{i=1}^{\numtimeseries}$ be the set of $\numtimeseries$ univariate time series of lengths $\timeserieslength_i$. Then, we run the following preprocessing steps:
\begin{enumerate}
    \item Remove all time series which are constant to always be able to compute the MASE metric. 
    \begin{equation}
        \timeseriesdataset_1 = \{ \timeseries^{(i)} \in \timeseriesdataset \condition \neg \exists c \in \R: \timeseries^{(i)} = c \cdot \bm{1}_{\timeserieslength_i} \}
    \end{equation}
    \item Remove all time series which are too short. We exclude all time series of length ${\timeserieslength \le (\horizonmultiple + 1) \horizon}$ where $\horizon$ is the prediction horizon and $\horizonmultiple$ is a dataset-specific integer which governs over how many prediction lengths we want to perform testing\footnote{Assume $\horizonmultiple=2$ and a time series $\timeseries \in \R^\timeserieslength$ from a dataset with prediction horizon $\horizon$. Then, we perform testing on time series $\timeseries^{(1)} = \timeseries_{1:\timeserieslength}$ and $\timeseries^{(2)} = \timeseries_{1:(\timeserieslength-\horizon)}$ while $\timeseries^{(3)} = \timeseries_{1:(\timeserieslength-2\horizon)}$ is being trained on.}. We are using $\horizonmultiple = 1$ for all datasets but ``Electricity'' ($\horizonmultiple = 7$), ``Exchange Rate'' ($\horizonmultiple = 3$), ``Solar'' ($\horizonmultiple = 4$), ``Traffic'' ($\horizonmultiple = 7$) and ``Wiki'' ($\horizonmultiple = 3$).
    \begin{equation}
        \timeseriesdataset_2 = \{ \timeseries^{(i)} \in \timeseriesdataset_1 \condition \timeserieslength_i > (\horizonmultiple+1)\horizon \}
    \end{equation}
    \item Split time series into training, validation and test data. For each time series, we split off the prediction horizon $\horizonmultiple-1$ times to obtain time series for the test data. Then, we split off the prediction horizon another time to obtain a validation time series per training time series. Eventually, we construct the following sets:
    \begin{align}
        \timeseriesdataset_{\test} &= \bigcup \left\{ \{ \timeseries^{(i)}_{1:\timeserieslength_i}, \ldots, \timeseries^{(i)}_{1:(\timeserieslength_i - (\horizonmultiple-1)\horizon)} \} \condition \timeseries^{(i)} \in \timeseriesdataset_2 \right\} \\
        \timeseriesdataset_{\val} &= \left\{ \timeseries^{(i)}_{1:(\timeserieslength_i - \horizonmultiple\horizon)} \condition \timeseries^{(i)} \in \timeseriesdataset_2 \right\} \\
        \timeseriesdataset_{\train} &= \left\{ \timeseries^{(i)}_{1:(\timeserieslength_i - (\horizonmultiple+1)\horizon)} \condition \timeseries^{(i)} \in \timeseriesdataset_2 \right\}
    \end{align}
\end{enumerate}
For the ``Taxi'' dataset, we need to slightly augment the preprocessing to allow for discontinuities in the available time series. Essentially, each time series $\timeseries \in \timeseriesdataset$ consists of an old part $\timeseries_{\text{old}}$ and a new part $\timeseries_{\text{new}}$ (data throughout January of two successive years). We then obtain the training data from $\timeseries_{\text{old}}$ and construct the validation data by cutting the prediction horizon from $\timeseries_{\text{old}}$. The test dataset is constructed from $\timeseries_{\text{new}}$ as in bullet point 3 with $\horizonmultiple = 56$.

\section{Training Details} \label{apx:training}

This section describes in detail how the deep learning models from Table~\ref{tab:model-overview} are trained on the datasets outlined in the previous section. As outlined in Section~\ref{sec:benchmark-methods}, parametric local methods directly estimate their parameters at training time. While deep learning models are trained on $\timeseriesdataset_{\train}$ such that models at different checkpoints can be compared using $\timeseriesdataset_{\val}$, parameters of parametric local methods are estimated using $\timeseriesdataset_{\val}$.

\subsection{Training Time} \label{apx:training-time}

For all deep learning models, we run training for a predefined duration depending on the dataset size. For $n$ total observations in the training data, we run training for $\duration$ hours where
\begin{equation}
    \duration = 2^\eta \qquad \text{with} \qquad
    \eta = \min \left\{ \max \left\{ \left\lfloor \log_{10}\left(\frac{n}{10000}\right) \right\rfloor, 0 \right\}, 3 \right\}.
\end{equation}
Training therefore always runs between one and eight hours (refer to Table~\ref{tab:dataset-statistics} for training times on individual datasets). Even on small datasets, we train for an hour to ensure that the trained model receives sufficiently many gradient updates.

\subsection{Probabilistic Forecasts} 

Forecasts of all methods are required to be probabilistic to store the 10-quantiles $\quantile_{0.1}, \quantile_{0.2}, \ldots, \quantile_{0.9}$. While some methods forecast the quantiles directly (e.g. MQ-CNN, TFT), other methods provide samples of the output distribution (e.g. DeepAR) or point forecasts (e.g. N-BEATS). For sampling-based models, we simply compute the empirical 10-quantiles. For the point forecasts, we interpret forecasts as Dirac distributions centered around the forecasted value $y$. Thus, each quantile can be set to the value $y$.

\subsection{Learning Rate Scheduling}

All deep learning models are trained using the Adam optimizer \citep{adam} with an initial learning rate of $\rho = 10^{-3}$. During training, we decrease the learning rate three times by a factor of $\lambda = \frac{1}{2}$ using a linear schedule. When training for a duration $\duration$, we decrease the learning rate after durations $\mathcal{D} = \{ \frac{\duration}{4}, \frac{\duration}{2}, \frac{3 \duration}{4} \}$. This results in a final learning rate of $\lambda^3 \rho = 1.25 \cdot 10^{-4}$. The decay is always applied after the first batch exceeding one of the durations in $\mathcal{D}$.

\subsection{Validation and Testing}

During training, we save the model and compute its loss $\loss$ as well as the nCRPS $\crps$ on the validation data for a total of 11 times. For a model being trained for a duration $t$, we perform these computations after durations $\mathcal{D} = \mathcal{H} \cup \{ k\frac{t}{9} \condition k \in \{ 1, \ldots, 9 \} \}$ with intervals $\hyperbandintervalset = \{ t \cdot 3^{-k} \condition {k \in \{ 0, \ldots, 4 \}} \}$. For each $\duration \in \mathcal{D}$, we run validation on $\timeseriesdataset_{\val}$ after the first batch where the total training time exceeds $\duration$ and compute the losses $\loss_d$ and $\crps_d$.

Note that we do not add the time taken to run validation to the training time to evaluate whether it exceeds $d$. We argue that validation can be run swiftly. If necessary, it could also be run on a subset of all available time series for further speedup.

For testing, we choose 5 of the 11 models that we saved during training. For each $\duration \in \hyperbandintervalset$, we choose the model with the lowest nCRPS that was encountered up to $\duration$. Note that this may result in choosing the same model multiple times if continued training did not yield any improvements on the validation data.

\subsection{Infrastructure and Training Statistics}

All training jobs were scheduled on AWS Sagemaker\footnote{\url{https://aws.amazon.com/sagemaker/}} using CPU-only \texttt{ml.c5.2xlarge} instances (8~CPUs, 16~GiB memory). In few cases, we ran jobs on other instance types:
\begin{itemize}
    \item \texttt{ml.m5.2xlarge} instances (8~CPUs, 32~GiB memory) to counteract out-of-memory-issues. Used for N-BEATS on ``Corporación Favorita'' and ``Weather'', DeepAR on ``Corporación Favorita'',  and Prophet on ``Wind Farms''.
    \item \texttt{ml.m5.4xlarge} instances (16~CPUs, 64~GiB memory) to allow for even more memory-intensive models. Used for N-BEATS on ``London Smart Meters''.
    \item \texttt{ml.c5.18xlarge} instances (72~CPUs, 144~GiB memory) to speed up predictions by parallelization. Used for ARIMA on ``Taxi''.
\end{itemize}
In total, we ran 4,708 training jobs. Including validation and testing, this amounted to roughly 684 days of total training time ($\sim$3 hours and 30 minutes per training job). Parallelized over 200 instances, actual training time for our benchmark was roughly 4 days and amounts to approximately \$7,300 of compute costs on AWS.

\section{Surrogate Model} \label{apx:surrogates}

In the following, we discuss the choice of the surrogate model. First, we outline why a parametric surrogate model is desirable. Then, we describe how model configurations are vectorized to be used by parametric surrogate models. Afterwards, we list the ranking loss with linear discounting that we mention in Section~\ref{sec:pareto-select} and eventually compare the ranking performance of different surrogate models.

\subsection{Choice of Surrogate Model}

Intuitively, a nonparametric surrogate model might appear to be optimal for ranking models. However, a parametric surrogate has several advantages. First, it can interpolate between neighboring configurations, which is essential when the configurations differ between different datasets or when the objective is noisy. Second, a parametric model allows for suggesting new configurations that e.g. maximize capacity under a given constraint. Third, a parametric surrogate model $\surrogate$ may also be conditioned on the dataset by taking as input dataset features (such as number of time series or observations in the task at hand) by concatenating features to the input vector of $\surrogate$ --- although we do not consider this possibility here.

\subsection{Input Vectorization}

When using XGBoost or an MLP as surrogate model for predicting model performance, models must be represented as feature vectors. For this, we proceed as follows to encode model configurations:
\begin{itemize}
    \item The model type (i.e. ARIMA, DeepAR, \ldots{}) is encoded via a one-hot encoding, yielding a 13-dimensional vector.
    \item Every model hyperparameter (across models) defines a new real-valued feature. Hyperparameters that are shared among deep learning models (context length multiple, training time, learning rate\footnote{Although never altered, this provides a potential hyperparameter.}) are re-used across models. This results in 15 additional features which are all standardized to have zero mean and unit variance.
\end{itemize}
Inputs to the MLP are, thus, 28-dimensional. Features that are missing (e.g. all classical models do not provide a context length multiple) are imputed by the features' means (resulting in zeros due to the standardization).

\subsection{Ranking Loss with Linear Discounting}

Section~\ref{sec:pareto-select} outlines that the MLP surrogate model $\surrogate$ is trained via listwise ranking using linear discounting. For training on the offline evaluations $\mathcal{D}$, $\surrogate$ minimizes the loss $\mathcal{L}$ via SGD. As ranking is performed for each objective $k \in \{ 1, \ldots, \numobjectives \}$ and task $j \in \{ 1, \ldots, \numtasks \}$ independently, $\mathcal{L}$ can be written as follows:
\begin{equation}
    \mathcal{L}(\surrogate) = \sum_{k=1}^{\numobjectives}{\sum_{j=1}^{\numtasks}{\mathcal{L}_{kj}(\surrogatesomek{k})}}
\end{equation}
where $\surrogatesomek{k}$ yields the surrogate model's outputs for objective $k$ and $\mathcal{L}_{kj}(\surrogatesomek{k})$ is defined as follows (where we ignore indices $k$ and $j$ in the formula for notational convenience):
\begin{equation}
    \mathcal{L}_{kj}(\surrogatesomek{k}) = \sum_{i=1}^{\numevaluations}{\frac{1}{i} \left(\surrogatesome\left(x_{r(i)}\right) + \log{\sum_{l=i}^{\numevaluations}{\exp\left(-\surrogatesome\left(x_{r(i)}\right)\right)}} \right)}
\end{equation}
Here, $\numevaluations$ provides the number of available evaluations for task $j$ and $r(i)$ defines the index of the configuration $x$ that is ranked at position $i$ with respect to objective $k$ among all configurations for task $j$. The ranks of the configurations $x_1, \ldots, x_\numevaluations$ with respect to an objective $k$ can be computed from the corresponding evaluations $y_{1;k}, \ldots, y_{\numevaluations;k}$. Note that rank $i = 1$ provides the index for the ``best'' configuration and $\surrogatesome$ outputs a smaller value for configurations that have a smaller rank (i.e. are ``better'').

\subsection{Comparison of Different Models}

The following compares the MLP surrogate trained via listwise ranking introduced in Section~\ref{sec:experiments} to other possible choices for the surrogate model. Much like the approach described in Section~\ref{sec:experiments-results}, we evaluate surrogate models by performing LOOCV: surrogates are evaluated on each of the \numdatasets{} benchmark datasets one after the other, being trained on the remaining 43 datasets. Surrogates are evaluated with respect to different ranking metrics that give an overview of the models' ability to \emph{rank} the nCRPS values of model configurations on different datasets. As metrics for measuring the ranking performance on a single dataset, we consider the following which yield metrics between 0 and 1:
\begin{itemize}
    \item \textbf{Mean Reciprocal Rank (MRR)} \citep{ranking-metrics}: Computed as one divided by the predicted rank of the model with the lowest true nCRPS.
    \item \textbf{Precision@k} \citep{ranking-metrics}: The fraction of the $k$ models with the lowest true nCRPS captured in the top $k$ predictions.
    \item \textbf{Normalized Discounted Cumulative Gain (NDCG)} \citep{ranking-metrics}: A measure for the overall ranking performance. It first computes the discounted cumulative gain (DCG) by summing the discounted relevance scores $1, 0.9, \ldots, 0.1$ of the 10 models with the lowest true nCRPS scores --- more specifically, a relevance score $\pi(i)$ is discounted by $\log_2{k + 1}$ where $k$ is the predicted rank of the model with true rank $i$. Then, it normalizes the DCG by using the ideal DCG (iDCG) to obtain the NDCG.
\end{itemize}
As a comparison to the ranking MLP surrogate with linear discount (\textbf{MLP Ranking + Discounting}), we first consider a random surrogate which predicts nCRPS by sampling from $\mathcal{U}(0, 1)$ as a baseline (\textbf{Random}). Then, we use a nonparametric model which predicts the nCRPS as the average among all training datasets (\textbf{Nonparametric Regression}) and one that predicts the average rank of the nCRPS across training datasets (\textbf{Nonparametric Ranking}). Additionally, we consider XGBoost surrogates which are trained via regression (\textbf{XGBoost Regression}) and pairwise ranking\footnote{XGBoost does not provide a working version of listwise ranking.} (\textbf{XGBoost Ranking}), respectively. Lastly, we consider an MLP with the same architecture but trained via regression instead of listwise ranking (\textbf{MLP Regression}) and one without any discounting (\textbf{MLP Ranking}).

\begin{table}[t]
    \centering
    \footnotesize
    \caption{Comparison of different surrogate models with respect to their ability to rank model configurations according to their nCRPS. Ranking metrics are averaged across all benchmark datasets and multiplied by 100. Metrics for non-deterministic surrogates (random and MLP) are further averaged by running the entire evaluation over five random seeds. Best values for each metric are displayed in bold.}
    \begin{tabular}{lccccc}
    \toprule
    {} &            \textbf{MRR} &           \textbf{NDCG} &    \textbf{Precision@5} &   \textbf{Precision@10} &   \textbf{Precision@20} \\
    \midrule
    \textbf{Random                   } &           2.17 &          25.36 &           2.45 &           4.68 &           8.27 \\
    \midrule
    \textbf{Nonparametric Regression } &           6.88 &          35.80 &          13.18 &          20.91 &          32.73 \\
    \textbf{Nonparametric Ranking    } &           6.78 &          35.87 &          10.00 &          20.91 &          33.07 \\
    \midrule
    \textbf{XGBoost Regression       } &           5.68 &          35.34 &          13.18 &          22.05 &          32.61 \\
    \textbf{XGBoost Ranking          } &           6.64 &          35.63 &          12.27 &          21.59 &          32.84 \\
    \midrule
    \textbf{MLP Regression           } &           3.86 &          32.06 &           7.73 &          13.73 &          25.23 \\
    \textbf{MLP Ranking              } &           5.00 &          34.59 &          10.27 &          21.00 &          33.34 \\
    \textbf{MLP Ranking + Discounting} & \textbf{16.41} & \textbf{43.74} & \textbf{23.82} & \textbf{28.32} & \textbf{34.39} \\
    \bottomrule
\end{tabular}

    \label{tab:surrogate-comparison}
\end{table}

Table~\ref{tab:surrogate-comparison} clearly shows that the MLP with listwise ranking and linear discounting markedly outperforms all other choices for the surrogate model. Especially, it is much more capable of identifying the top configurations, stressed by the very high values of MRR and Precision@5 compared to the other surrogate models. This can likely be attributed to the linear discounting that encourages the model to focus on ranking the top configurations correctly. Thus, the loss formulation is more stable in the presence of outliers.

\section{\ours{} pseudo code} \label{apx:pseudocode}

\begin{algorithm}[ht]
    \footnotesize
    
\DontPrintSemicolon

\SetKwFunction{FComputeEpsilonNet}{compute\_epsilon\_net}
\SetKwFunction{FNonDominatedSort}{non\_dominated\_sort}
\SetKwFunction{FTrainSurrogateModel}{train\_surrogate\_model}
\SetKwFunction{FPop}{pop}
\SetKwFunction{FAppend}{append}
\SetKwFunction{FExtend}{extend}
\SetKwProg{Fn}{function}{:}{}

\KwData{Time series models $\modelspace$, offline evaluations $\mathcal{D}$, number of default models $\numrecommendations$}
\KwResult{A set $\{x_1, \dots, x_\numrecommendations\} \subset \modelspace$ of model defaults}

\;
\Fn{\FNonDominatedSort{$\mathcal{X}$, $\surrogate$}}{
    $\Lambda = []$\;
    \While{$\mathcal{X} \not= \emptyset$}{
        $\mathcal{P}_{\mathcal{X}, \surrogate} = \{ x \in \mathcal{X} \condition \neg \exists x' \in \mathcal{X} : x' \prec_{\objectivefunction} x \}$ \tcp*{Compute the Pareto front}
        $\mathcal{X} = \mathcal{X} \setminus \mathcal{P}_{\mathcal{X}, \surrogate}$ \tcp*{Remove Pareto front}
        \FExtend($\Lambda$, \FComputeEpsilonNet($\mathcal{P}_{\mathcal{X}, \surrogate}$, $\surrogate$))\;
    }
    \KwRet $\Lambda$\;
}

\Fn{\FComputeEpsilonNet{$\mathcal{X}$, $\surrogate$}}{
    $\lambda = [\FPop(\mathcal{X})]$ \tcp*{Choose and remove a random element}
    \While{$\mathcal{X} \not= \emptyset$}{
        $D = \{\}$ \tcp*{Mapping from remaining elements to minimum distances}
        \For(\tcp*[f]{Compute minimum distance to all chosen elements}){$x \in \mathcal{X}$}{
            $D[x] = \min_{x' \in \lambda}{\| \surrogate(x) - \surrogate(x') \|_2}$\;
        }
        \tcp{Append the element furthest away from the current selection}
        \FAppend($\lambda$, \FPop($\mathcal{X}$, $\argmax_{x \in \mathcal{X}}{D[x]}$))
    }
    \KwRet $\lambda$\;
}

\;
$\surrogate =$ \FTrainSurrogateModel($\mathcal{D}$)\;\vspace{0.15cm}

\tcp{Multi-objective sorting according to predictions}
$x_1, \dots, x_{|X|} = $ \FNonDominatedSort($\modelspace$, $\surrogate$)\;
\KwRet{$x_1, \ldots, x_{\min(\numrecommendations, |X|)}$}

    \caption{Overview of \ours{}}
    \label{alg:ours}
\end{algorithm}

\section{Analysis of datasets characteristics} \label{apx:benchmark-analysis}

In this section, we try to answer the following question: are there some simple dataset characteristics that can help to know which model is going to perform best? In particular, we analyze in details the results of Section~\ref{sec:benchmark_analysis} where we showed that, on some datasets, (1) classical methods outperformed deep learning methods and that (2) the performance of classical and deep learning methods could not be distinguished.

We start by analyzing more in depth the four out of \numdatasets{} datasets where deep learning models do not outperform classical methods --- Figure~\ref{fig:relative-improvement} showed that the best deep learning method outperforms the best classical methods on all but four of our benchmark datasets. On these four datasets (``M4 Hourly'', ``Temperature Rain'', ``Tourism Monthly'', ``Vehicle Trips''), the summary statistics differ wildly as can be seen from Table~\ref{tab:dataset-statistics} both in terms of the number of observations, the average length, and other characteristics. Further, the classical method that outperforms the best deep learning method is inconsistent across these datasets (``M4 Hourly'': STL-AR, ``Temperature Rain'': NPTS, ``Tourism Monthly'': ETS, ``Vehicle Trips'': NPTS). Notably, this does not include the local method that performs best on average (i.e. ARIMA, see Table~\ref{tab:model-comparison}).

Figure~\ref{fig:pvalues} showed that for 14 out of the \numdatasets{} benchmark datasets, the performance of classical methods is indistinguishable from the performance of deep learning methods. To further study these datasets, we plot in Figure~\ref{fig:clustermap} the correlation matrix of all $\nummodels{}$ benchmark methods' nCRPS ranks across the datasets (using only the default configuration for the deep learning models). Notably, for twelve datasets (``Exchange Rate'' through ``M1 Yearly'' on the y-axis), we observe a very low correlation with a large number of datasets. 
On these datasets, all models perform similarly due to strong stochasticity (e.g. ``Exchange Rate'', ``Bitcoin'', ``M1 Quarterly'', ...) or seasonality (e.g. ``Tourism'', ``Ride share'', ...) and predicting their ranking is therefore hard. Notably, 11 of these datasets overlap with the 14 datasets where classical methods are indistinguishable from deep learning methods. 

In line with these observations, we found that adding simple dataset grouping features such as domain type (electricity, retail, finance) as an input for a surrogate model predicting the performance of forecasting methods does not have much effect. Our conclusion is that the ranks of the best performing models seem to be relatively hard to identify only from simple dataset grouping rules. Nonetheless, we hope that by releasing the benchmark data, we help future research to identify dominant methods from dataset characteristics.

\begin{figure}[!ht]
    \includegraphics[width=\textwidth]{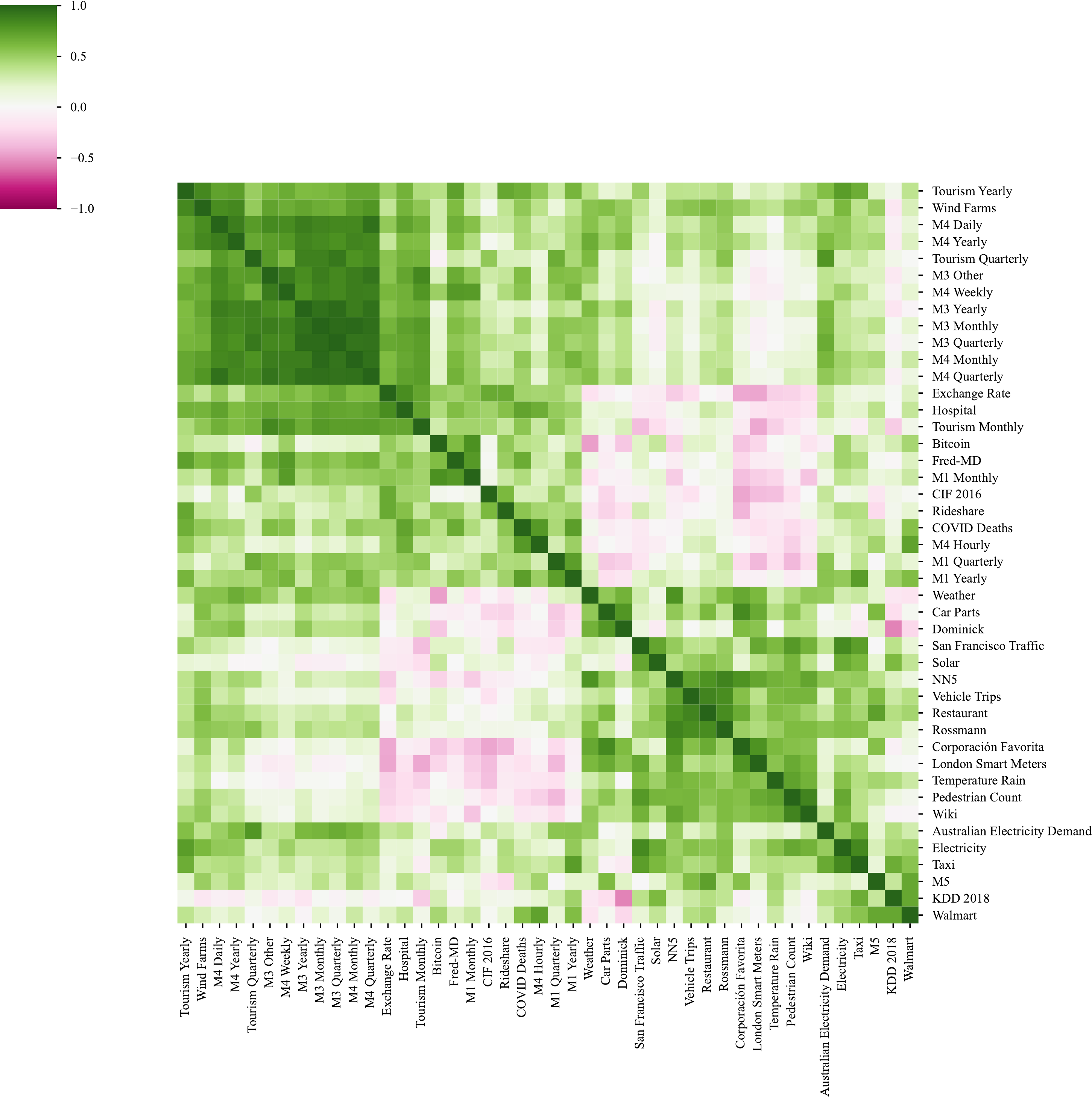}
    \caption{Correlation matrix of all benchmark methods' nCRPS ranks across all benchmark datasets. For the computation of the ranks, all classical methods and the default configuration of deep learning models are considered.}
    \label{fig:clustermap}
\end{figure}

\end{document}